\documentclass{article}
\usepackage[margin=1.2in]{geometry}
\usepackage{latexsym, amsthm, amscd, amsfonts, mathrsfs, amsmath, amssymb, stmaryrd, tikz-cd, mathrsfs, bbm, url, esint, mathtools, enumerate, caption, subcaption}
\usepackage{dsfont}
\usepackage{blkarray}
\usepackage[utf8]{inputenc} 
\usepackage[T1]{fontenc}    
\usepackage{hyperref}       
\usepackage{url}            
\usepackage{booktabs}       
\usepackage{nicefrac}       
\usepackage{microtype}      
\usepackage{xcolor}  
\usepackage{algorithm}
\usepackage{algpseudocode}
\usepackage{enumitem}
\usepackage{comment}

\usepackage{natbib}

\newcommand{\Cov}{\mathrm{cov}}

\newcommand{\pr}{\mathbb{P}}
\newcommand{\E}{\mathbb{E}}
\DeclareMathOperator{\CP}{CP}
\DeclareMathOperator{\Var}{Var}

\DeclareMathOperator{\argmin}{argmin}

\DeclareMathOperator{\ReLU}{ReLU}
\DeclareMathOperator{\Unif}{Unif}

\DeclareMathOperator{\sign}{sgn}

\DeclareMathOperator{\orb}{orb}

\DeclareMathOperator{\NN}{NN}

\DeclareMathOperator{\TV}{TV}
\DeclareMathOperator{\Inf}{Inf}
\newcommand{\cF}{\mathcal{F}}
\newcommand{\cX}{\mathcal{X}}
\newcommand{\cV}{\mathcal{V}}
\newcommand{\cU}{\mathcal{U}}

\newcommand{\cN}{\mathcal{N}}

\newcommand{\bR}{\mathbb{R}}

\newcommand{\bN}{\mathbb{N}}

\newcommand{\cL}{\mathcal{L}}

\newcommand{\cS}{\mathcal{S}}

\definecolor{mydarkblue}{rgb}{0,0.08,0.45}
  \hypersetup{ %
    colorlinks=true,
    linkcolor=mydarkblue,
    citecolor=mydarkblue,
    filecolor=mydarkblue,
    urlcolor=mydarkblue,
    }
\definecolor{DSgray}{cmyk}{0,0,0,0.7}
\definecolor{DSred}{cmyk}{0,0.7,0,0.7}

\theoremstyle{plain}
\newtheorem{definition}{Definition}
\theoremstyle{plain}
\theoremstyle{plain}
\newtheorem{theorem}{Theorem}
\theoremstyle{plain}
\theoremstyle{plain}
\theoremstyle{plain}
\theoremstyle{plain}
\newtheorem{proposition}{Proposition}
\theoremstyle{plain}
\theoremstyle{plain}
\newtheorem{lemma}{Lemma}
\theoremstyle{plain}
\newtheorem{corollary}{Corollary}
\theoremstyle{plain}
\theoremstyle{remark}
\newtheorem{remark}{Remark}
\theoremstyle{remark}
\theoremstyle{plain}
\usepackage[affil-it]{authblk} 

\title{The Benefits of Temporal Correlations:\\
SGD Learns $k$-Juntas from Random Walks Efficiently}

\author[1]{
Elisabetta Cornacchia}
\author[2]{Dan Mikulincer}
\author[3]{Elchanan Mossel}

\affil[1]{\small Bocconi University}
\affil[2]{\small University of Washington (UW).}
\affil[3]{\small Massachusetts Institute of Technology (MIT).}

\usepackage[parfill]{parskip}

\date{}

\begin{document}

\maketitle

\begingroup
    \renewcommand\thefootnote{}  
\footnotetext{\small 
\noindent
Authors are listed in alphabetical order. Email: \texttt{elisabetta.cornacchia@unibocconi.it}, \texttt{danmiku@uw.edu}, \texttt{elmos@mit.edu}.}
\endgroup

\begin{abstract}
  We study how temporal correlations in the data can make certain sparse learning problems efficiently learnable by gradient-based methods. Our focus is on Boolean \(k\)-juntas, a canonical sparse learning problem known to pose barriers for gradient-based methods under independent uniform samples. We show that this picture changes when the samples are generated by a lazy random walk on the hypercube. In this setting, the temporal dependencies can be exploited by a two-layer ReLU network trained using stylized-SGD with a temporal-difference loss, which compares target and predicted increments across consecutive samples. For every fixed $k$, the resulting sample complexity is essentially linear in the ambient dimension $d$. By contrast, we show that for large-batch gradient methods using standard convex pointwise losses, temporal correlations do not provide the same advantage. 
\end{abstract}

\section{Introduction}

The empirical success of neural networks continues to outpace our theoretical understanding. One way to narrow this gap is to study simplified settings in which gradient-based methods can be shown to learn efficiently~\cite{du2019gradient,damian2022neural,abbe2022merged,arous2021online}. Much of the existing work in this direction falls into one of two extremes: worst-case models, such as distribution-free PAC learning, where no statistical structure is assumed~\cite{valiant1984theory,shalev2014understanding}, and idealized stochastic models, where examples are drawn independently from a fixed distribution~\cite{arous2021online,bietti2020deep,abbe2023sgd,allen2019convergence,dandi2024benefits,bietti2022learning,glasgow2023sgd}. While both viewpoints have yielded important insights, they leave open an intermediate regime that is arguably closer to many real data sources: samples are neither adversarially chosen nor independent, but exhibit structured correlations.

Temporal correlations are a particularly common form of such structure. They arise in domains such as computer vision~\cite{ravanbakhsh2024deep}, healthcare~\cite{yao2022wild}, and environmental modeling~\cite{amato2020novel}, where data are generated sequentially and often display substantial temporal dependence or distribution shift. In this work, we study an idealized and tractable model of temporal dependence, focusing on the random walk on the hypercube. Our goal is to understand whether such temporal structure can itself influence the performance of gradient-based learning, complementing prior work that has focused on the role of spatial or geometric structure~\cite{goldt2020modeling,gerace2020generalisation}.

To study the effect of temporal correlations, we focus on learning Boolean \(k\)-juntas.
A junta is a function \(f:\{\pm1\}^d\to\mathbb{R}\) that depends only on a small subset of the coordinates: that is, there exists a set \(S\subseteq[d]\) with \(|S|=k\), and \(k=O(1)\), such that
\(
f(x)=\tilde f(x_S),
\)
where \(x_S\) denotes the restriction of \(x\) to the coordinates in \(S\). This problem has attracted significant recent attention as a canonical model of sparse feature learning, and has served as a useful testbed to rigorously study phenomena such as incremental feature learning~\cite{abbe2022merged,abbe2023generalization}, separations between loss functions~\cite{joshi2024complexity}, curriculum learning~\cite{cornacchia2023mathematical,abbe2023provable}, and knowledge distillation~\cite{panigrahi2024progressive}. In the \emph{i.i.d.} uniform setting, previous works have established complexity \emph{lower bounds} that scale unfavorably with $k$, and showed that several natural algorithms, including gradient-based training of standard neural networks, require $d^{\Omega(k)}$ time
~\cite{daniely2020learning,AS20,abbe2023sgd,shalev2017failures}.

In contrast, we study learning \(f\) from samples generated by a random walk on the hypercube. The resulting temporal correlations allow gradient-based methods to bypass barriers that are specific to independent uniform samples. Our main result shows that, with an appropriate temporal-difference loss, a two-layer ReLU network trained by a stochastic gradient-based method can learn any \(k\)-junta using a number of samples essentially linear in the ambient dimension \(d\), for every fixed \(k\). 
Interestingly, our theory and experiments suggest that whether this advantage is realized depends crucially on both the choice of loss function and the batch size.

\section{Summary of Contributions}
To explain our results, it is helpful to begin with the special case of a
\(k\)-parity,
\(
f(x)=\prod_{i\in S}x_i ,
\)
where \(S\subseteq[d]\), $|S|=k$, is the unknown support. 
Although parities are
efficiently learnable by specialized algorithms, such as Gaussian elimination
over \(\mathbb F_2\), they pose a well-known barrier for gradient-based methods
under \emph{i.i.d.} uniform inputs. In particular, for gradient descent with
limited gradient precision, \(d^{\Omega(k)}\) steps are needed in this setting
~\cite{daniely2020learning,AS20,barak2022hidden,kou2024matching}. The intuition is that, under the uniform distribution,
\(\E[f(x)x_j]=0 \) for every \(j\in[d],
\)
so first-order pointwise correlations do not reveal the support of the target
parity. 

The situation changes when the samples are generated by a random walk on the
hypercube. Since each step changes at most one coordinate, the finite difference
\[
\Delta f^{(t)}:=f(x_{t+1})-f(x_t)
\]
is nonzero only when the updated coordinate belongs to \(S\). Thus, the support
can be recovered by a simple procedure that records which coordinate flips
produce nonzero label increments and a coupon-collector argument shows that this
takes \(\Theta(d\log k)\) steps. Our goal, however, is not to rely on this
explicit problem-specific procedure. Rather, we ask whether gradient-based
methods can extract the same information automatically.



Our main result shows that when a neural network is trained by a stochastic gradient-based algorithm with a suitable loss, the updates themselves pick up the information carried by the temporal differences, thus one does not need to recover the support by hand. To go beyond parities, we must consider the influences $\mathrm{Inf}_i(f)$ within the support when $i \in S$, to ensure the finite differences carry enough information, see Section \ref{sec:results} for the exact definition. We state the following informal result for $k$-juntas.
\begin{theorem}[Informal] \label{thm:informal}
Let $f:\{\pm1\}^d \to \mathbb{R}$ be a $k$-junta, supported on $S \subset [d]$, let $(x_t)_{t=1}^T$ be a lazy random walk on $\{\pm1\}^d$, and let $\varepsilon > 0$. Suppose that $\min_{i\in S}\mathrm{Inf}_i(f)\geq \tau$, for some $\tau > 0$, and that $T = \tilde\Omega_{k,\varepsilon,\tau}(d)$. Then, a two-layer neural network, trained using layerwise SGD with suitable loss (see Algorithm \ref{alg:layerwise-sgd}), on the samples $(x_t, f(x_t))_{t=1}^T$, achieves
\[
\E_{x \sim \mathrm{Unif}(\{\pm1\}^d)} \bigl[(\NN(x) - f(x))^2\bigr] < \varepsilon.
\]
\end{theorem}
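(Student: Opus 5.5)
We would prove Theorem~\ref{thm:informal} in two phases that mirror the layerwise structure of Algorithm~\ref{alg:layerwise-sgd}. Phase one trains only the first-layer weights for one (or a few) passes with the temporal-difference loss $\ell_t = \bigl(\Delta \NN^{(t)} - \Delta f^{(t)}\bigr)^2$, where $\Delta \NN^{(t)} = \NN(x_{t+1})-\NN(x_t)$. Phase two freezes the first layer and fits the second layer by convex regression. Throughout we use a small (or symmetric, e.g.\ paired $\pm$) initialization, so that $\NN\approx 0$ at the start of phase one.

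\textbf{Phase one: recovering the support.} In a lazy walk step, at most one coordinate $i$ is resampled, and then $\Delta f^{(t)} = -x_{t,i}\,\partial_i f(x_t)\cdot \mathbf 1[\text{flip}]$, where $\partial_i f$ is the discrete derivative. Near initialization, the gradient of $\ell_t$ with respect to a hidden weight $w_j$ is approximately
\[
-2\,\Delta f^{(t)}\, a_j \bigl(\sigma'(\langle w_j,x_{t+1}\rangle)x_{t+1}-\sigma'(\langle w_j,x_t\rangle)x_t\bigr).
\]
When $\langle w_j,\cdot\rangle$ is small, this is dominated by the flipped coordinate $e_i$. Consequently, the accumulated update on coordinate $i$ has expectation proportional to $\frac1d\,\E[(\partial_i f)^2\cdot(\text{activation factor})]$, which is of order $\mathrm{Inf}_i(f)/d \ge \tau/d$ for $i\in S$. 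For $i\notin S$ it is exactly $0$, because $\Delta f^{(t)}=0$ whenever an irrelevant coordinate flips. The key point is that irrelevant coordinates receive \emph{no} signal and essentially no noise, unlike the i.i.d.\ setting, where all coordinates are fluctuating noise.

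To turn this into a support-recovery statement, we run $T\approx C_{k,\tau}\,d\log(dk/\delta)$ steps with step size $\eta\asymp 1/\mathrm{poly}(k)$. We then apply a martingale concentration (Freedman) argument on the coordinate-wise increments, together with a coupon-collector bound guaranteeing that every $i\in S$ is flipped $\Omega(\log(k/\delta))$ times. The conclusion we aim for is that after phase one each hidden weight satisfies $w_j = u_j + r_j$, where $u_j$ is supported on $S$ with entries bounded below by $c(k,\tau)$ and effectively random across neurons (the randomness comes from the signs $a_j$, the bias, and the initial $w_j$ restricted to $S$), while $\|r_j\|_\infty$ stays at initialization scale. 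This requires a step-size and time window in which the network output stays small, so that the linearization above remains valid.

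\textbf{Phase two: fitting the junta.} With the first layer frozen, the features $\sigma(\langle w_j,x\rangle + b_j)$ are, up to a perturbation controlled by $r_j$, random ReLU features of $x_S\in\{\pm1\}^k$. Every function on $\{\pm1\}^k$ is representable by $m=m(k,\varepsilon)$ such features with bounded second-layer norm, using a standard random-features approximation on the $2^k$ points. Since the loss in the second layer is convex, SGD on it, using either the temporal-difference loss or the squared loss on the walk, converges to $\varepsilon$-error in $\mathrm{poly}(k,1/\varepsilon)$ samples times a mixing-time overhead of $O(d\log d)$ for the dependent samples. This stays within the $\tilde O_{k,\varepsilon,\tau}(d)$ budget. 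The error measured under the uniform distribution then follows from stationarity of the walk.

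\textbf{Main obstacle.} We expect the hardest step to be controlling phase one rigorously beyond linearization. Three issues must be handled at once: the activation pattern $\sigma'$ changes as the weights move; the temporal dependence of $\Delta f^{(t)}$ along the walk means the increments are not independent; and we must show that the learned $u_j$ are diverse enough to give a spanning feature set. Our first attempt would be to use a stylized update in which only the bias-free derivative at initialization is used, i.e.\ a single large step or an averaged gradient. This makes $u_j \propto a_j\sum_t \Delta f^{(t)}(x_{t+1}-x_t)$, which we would analyze directly as an additive functional of the Markov chain. Diversity would then be injected through random biases in phase two.
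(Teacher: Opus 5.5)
There is a genuine gap at the core of your Phase one. You claim the accumulated update on a relevant coordinate $i$ has mean of order $\mathrm{Inf}_i(f)/d$. But when $i$ flips, $\Delta f^{(t)}=-2x_{t,i}\,\partial_i f(x_t)$ and $(x_{t+1}-x_t)_i=-2x_{t,i}$. Hence $\Delta f^{(t)}(x_{t+1}-x_t)_i=4\,\partial_i f(x_t)$, which is \emph{linear} in $\partial_i f$, not quadratic.

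With all units active (as in your stylized update, and in the paper's initialization $w^{(0)}=0$, $b^{(0)}=1/N$), the per-step mean is $\frac{4p}{d}\E[\partial_i f]=\frac{4p}{d}\hat f_{\{i\}}$ (cf.\ Lemma~\ref{lem:Yexpec}). This is exactly the degree-one correlation $\E[f(x)x_i]$ already available from i.i.d.\ data, and it vanishes for every $k$-parity with $k\ge 2$. A small-weight activation factor only adds the correlation of a halfspace indicator with $\partial_i f$, which is negligible for the same reason the i.i.d.\ problem is hard.

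The quantity of order $\mathrm{Inf}_i(f)$ is the \emph{second} moment, $\E[(\Delta f^{(t)}(x_{t+1}-x_t)_i)^2\mid i\text{ flips}]=16\,\mathrm{Inf}_i(f)$, versus exactly $0$ off $S$. A Freedman-type concentration argument therefore only shows that the $S$-coordinates concentrate near zero; it cannot give entries bounded below by $c(k,\tau)$. The support information must be extracted from the fluctuations via \emph{anti}-concentration. Your observation that irrelevant coordinates receive neither signal nor noise is correct, and it is what the paper uses.

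The second missing piece is representability. All neurons see the same trajectory, so after Phase one every first-layer vector is a multiple of one random vector $g=\sum_t\Delta f^{(t)}(x_{t+1}-x_t)$ supported on $S$. This holds exactly in your stylized update and to leading order in your linearized regime. The Phase two features are therefore ReLUs of the single projection $\langle g,x_S\rangle$ with random biases, so a ``standard random-features'' spanning argument does not apply. Lower bounds on $|g_i|$ would not suffice either: $g=(1,1)$ cannot represent $\mathds{1}\{x_1=1,x_2=-1\}$.

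What you need is that $|\langle g,s-r\rangle|>\varepsilon$ with probability $1-O_{k,\tau}(\varepsilon)$ for every $s,r\in\{\pm1\}^k$ with $\tilde f(s)\neq\tilde f(r)$. The paper proves this in Proposition~\ref{prop:non-degeneracy} in four steps:
\begin{enumerate}
\item It writes $\langle g,v\rangle$ as an additive functional of the edge walk on $\{\pm1\}^k$.
\item It lower-bounds the one-step variance by $\frac{32p}{k}\sum_{v_j\neq 0}\mathrm{Inf}_j(f)$.
\item It shows the autocovariances are nonnegative, so the dynamical variance is at least this one-step variance.
\item It applies a quantitative Markov-chain CLT \cite{kloeckner2019effective} to obtain Gaussian anti-concentration.
\end{enumerate}
This requires a batch $B\gtrsim d/\varepsilon^2$, so that each relevant coordinate is hit on the order of $\varepsilon^{-2}$ times (up to $k,\tau,p$ factors). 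Your coupon-collector count only gives $O(\log(k/\delta))$ hits.

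Separation then yields the explicit certificate of Lemma~\ref{lem:certificate}. Random biases land between consecutive projected values, and differences of two ReLUs form step functions that reproduce $f$ exactly with $\|a^*\|_\infty=O(\|f\|_\infty/\varepsilon)$. Given this, your Phase two (Markovian SGD with an $O(dk/p)$ mixing-time overhead) is in line with the paper's.
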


The mechanism behind the result is a second-moment effect created by the
random-walk dynamics. For a parity of degree at least two, the signed quantity
\[
\Delta f^{(t)} \Delta x_j^{(t)},
\]
where $\Delta x_j^{(t)}:=x_j^{(t)}-x_j^{(t-1)} $, still has mean zero, so there is no informative first-order signal. However, its
second moment distinguishes relevant and irrelevant coordinates: it is positive
for \(j\in S\) and zero for \(j\notin S\). Thus, temporal differences induce a
support-dependent anisotropy that is absent under independent uniform sampling.
The temporal-difference loss~(see Def.~\ref{def:TDloss}) is designed to expose this anisotropy to the
gradient updates.

In Section~\ref{sec:results}, we make Theorem~\ref{thm:informal} precise and
give the full layerwise SGD procedure. Up to problem-dependent constants and lgoarithmic terms, the
sample complexity of the resulting gradient-based algorithm matches the
\(\Theta(d\log k)\) scale of the explicit support-recovery procedure. Compared
with the \(d^{\Omega(k)}\) barriers known for independent uniform samples, this
shows that temporal correlations can dramatically change the tractability of
sparse learning problems.

Our second contribution is a complementary lower bound showing that temporal
correlations are not automatically useful for all gradient-based methods. In
Section~\ref{sec:lower_bound}, we prove that noisy gradient descent with large
batch size and standard convex pointwise losses, such as the square or hinge
loss, remains controlled by a cross-predictability barrier analogous to the
\emph{i.i.d.} setting. Thus, in the large-batch regime, pointwise losses do not
substantially exploit the temporal structure. By contrast, this lower bound does
not apply to small-batch SGD, and our experiments suggest that small-batch
square-loss SGD may sometimes exploit temporal dependencies as well. We discuss
this phenomenon in Section~\ref{sec:numerical}.

\section{Related Works}
\paragraph{Learning juntas from i.i.d. data.}
Several works considered the problem of learning juntas, and in particular sparse parities, under \emph{i.i.d. uniform} inputs, as a testbed for understanding the computational limits of gradient descent methods. 
For parity functions, stochastic gradient descent (SGD) with small batch size can learn the target efficiently on an emulation network~\citep{abbe2020poly}. By contrast, in the Statistical Query (SQ) model~\citep{kearns1998efficient}, and similarly for gradient descent with limited gradient precision, any such learning procedure requires at least \(\Omega\!\left(d^k\right)\) time~\citep{abbe2020poly}. For standard shallow networks, positive results have been established for sparse parities, matching the SQ lower bound~\citep{barak2022hidden,glasgow2023sgd,kou2024matching}. For juntas, the complexity depends on the Fourier-Walsh structure of the target function, as captured by the \emph{leap}~\cite{abbe2022merged,abbe2023sgd}, and the 
\emph{SQ leap}~\cite{joshi2024complexity} complexities. 
Other works have shown that under i.i.d. inputs from shifted product distributions, sparse parities~\cite{malach2021quantifying,daniely2020learning} and juntas~\cite{cornacchia2025low} become significantly easier than under the uniform distribution.



\paragraph{Beyond i.i.d. data.} 
Recent works show that gradient-based learnability of sparse Boolean functions can change beyond the static \emph{i.i.d.} setting. Positive distribution shift (PDS), for example, shows that suitably shifted training distributions can help SGD perform better on the test distribution~\cite{medvedev2026positive}, while curriculum-learning results vary the training distribution over time to expose easier or more informative examples first~\cite{cornacchia2023mathematical,abbe2023provable}. Our results fit into this broader line, but the mechanism is different: the learning advantage does not come from a hand-designed distribution shift, but from the temporal correlations generated by the sampling process itself. Gradient-based optimization with Markovian sampling scheme has also been studied from an optimization perspective~\cite{sun2018markov,even2023stochastic}. These results primarily quantify how temporal dependence affects optimization rates for a fixed objective, whereas our focus is on showing that the temporal correlations can change the information exposed to the gradients, making certain sparse learning problems easier.

\paragraph{Random walk data as an extension of PAC learning.}
\label{sec:related_work_pac}
The problem of learning from correlated examples, particularly through the observation of a Markovian random walk on the Boolean hypercube, has emerged as a rich intermediate model between passive uniform PAC learning and active learning with membership queries. The idea of incorporating temporal differences goes back to Sutton \cite{sutton1988learning}, while the present setting was initially proposed by Aldous and Vazirani~\cite{aldous1990markovian}, and later formalized computationally by Bartlett, Fischer, and H\"offgen~\cite{bartlett2002exploiting}. A notable result in this setting is the result by Bshouty, Mossel, O'Donnell, and Servedio~\cite{bshouty2005learning}, who demonstrated that polynomial-size Disjunctive Normal Form (DNF) formulas can be efficiently learned from a uniform random walk. By bridging the random walk model with statistical query and noise sensitivity frameworks, their work established the viability of random walks to simulate models that rely on uniformly distributed but correlated data. A subsequent work~\cite{elbaz2005separating} provided a rigorous separation, proving that under cryptographic assumptions, random walk learning is strictly more powerful than standard uniform PAC learning.
For the specific problem of learning parities and juntas it was well known that these can be learned using random walks. Arpe and Mossel~\cite{arpe2008agnostically} proved that the class of $k$-juntas can be \emph{agnostically} learned from a uniform random walk on $\{\pm 1\}^d$. Their algorithm, which is combinatorial in nature, achieves this in time polynomial in $d$, $2^{k^2}$, $\varepsilon^{-k}$, and $\log(1/\delta)$. 
This framework was later expanded by Jackson and Wimmer~\cite{jackson2014new}, who generalized the random walk model and corroborated the efficient agnostic learnability of juntas within these broader correlated spaces. More recent work on graphical models also showed the 
advantage of using dynamics such as Glauber dynamics to learn Markov Random Fields~\cite{Bresleretal2018,Gaitondeetal2024}.

\section{Efficient Learning with the Temporal Difference Loss} \label{sec:results}
	In this section, we formalize Theorem \ref{thm:informal} by introducing a stochastic gradient-based algorithm to train a neural network, while leveraging temporal correlations in the data. As we will demonstrate, this algorithm can learn general $k$-juntas which satisfy a natural non-degeneracy condition. We begin by explaining the exact setting and the input to the algorithm, then we present Algorithm \ref{alg:layerwise-sgd}, and state its convergence and statistical guarantees.
	\paragraph{Data.} We consider learning a Boolean $k$-junta, i.e. $f: \{ \pm 1 \}^d \to \bR$ such that $f(x) = \tilde f(x_S)$, where $S \subseteq [d]$ and with $|S|=k = O_d(1)$. We consider inputs generated from a \textit{lazy} random walk on the hypercube. 
	
	\begin{definition}[Lazy random walk on the hypercube with flip probability $p$]\label{def:lazy-rw}
		Let
		\begin{align*}
			&x^{(0)} \sim \Unif \{ \pm  1 \}^d, 
		\end{align*}
		and for each $t \geq 0$, define
		\begin{align*}
			x^{(t+1)} = x^{(t)} - 2 Z_t x_{j_t}^{(t)}e_{j_t},
		\end{align*}
		where $j_t \overset{iid}{\sim} \Unif \{1,...,d\}$, and $Z_t \overset{iid}{\sim} {\rm Ber}(p)$, with some fixed $p\in (0,1)$, and $j_t, Z_t$ are independent of each other, and of $x^{(0)}$. $e_{j_t}\in\{0,1\}^d$ here denotes the $j_t$-th standard basis vector (i.e., $(e_{j_t})_i=\mathds{1}\{i=j_t\}$)
	\end{definition}
	In other words, at each step with probability $1-p$ the chain stays put, and with probability $p$ it flips a uniformly random coordinate. 
	\paragraph{Architecture and algorithm.}
	We consider a two-layered $\ReLU$ network:
	\begin{align}
		\NN(x;\theta) = \sum_{i=1}^N a_i \ReLU(w_i^Tx+b_i), 
	\end{align}
	where for every $i \in [N]$, $w_i \in \mathbb{R}^d$, $a_i,b_i \in \mathbb{R}$, and $\theta =(w_i,a_i,b_i)_{i=1}^N$ denotes the set of all trainable parameters.
   
   To train the network, we will take gradient steps to minimize the \textit{temporal difference (TD) loss}, which is particularly suited to handle temporal correlations, and which we now present. 
    \begin{definition}[Temporal Difference (TD) loss] \label{def:TDloss}
        Define the \emph{temporal difference (TD)} loss with parameter $\alpha \in [0,1]$ as follows. For an estimator $\hat f: \bR^d \to \bR$ relative to a target function $f: \bR^d \to \bR$ computed at samples $x^{(t-1)}$ and $x^{(t)}$ as:
        \begin{align}
		\cL^{\rm TD}_{\alpha}(f, \hat f; x^{(t)},x^{(t-1)}):= \frac{\alpha}{2} \left(\Delta f^{(t)} -  \Delta \hat f^{(t)}\right)^2+ \frac{1-\alpha}{2} \left(f(x^{(t)})- \hat f(x^{(t)}) \right)^2,
	\end{align}
	where $\Delta f^{(t)} = f(x^{(t)}) - f(x^{(t-1)})$ and similarly $\Delta \hat f^{(t)} = \hat f(x^{(t)}) - \hat f(x^{(t-1)})$.
    \end{definition}
    If $\alpha=0$, this reduces to the standard square loss. If $\alpha=1$, the loss depends only on finite differences along the trajectory and is invariant to adding a constant to $\hat f$ (i.e., any $\hat f=f+c$ yields zero loss), so one cannot use $\alpha = 1$ if the goal is to learn the function itself.
    Nevertheless, in the algorithm below, we will start with $\alpha = 1$, since this is useful in order to extract information about the support of $f$. We will then switch to $\alpha < 1$, in order to remove the degeneracy of the loss, and learn the function while knowing its support. This choice is made mainly for convenience in the analysis and should not be viewed as essential; In our experiments in Section~\ref{sec:numerical}, we choose $\alpha$ sufficiently close to $1$, and we keep it fixed throughout training.

    When using the TD loss, we always take mini-batches to consist of consecutive pairs $(x^{(t-1)},x^{(t)})$ taken from a single random-walk trajectory. With these details in place, we implement SGD with the TD loss in Algorithm \ref{alg:layerwise-sgd}, which is carried out in two layer-wise phases. 
    \begin{itemize}
        \item In the first phase, the first layer is trained for $T_1$ steps using a large mini-batch of size $B$, while the second layer is kept fixed. In practice, we show that as long as $B$ is large enough, we can take $T_1 =1$. The main point is that the TD loss picks up information concerning the support of $f$. As a result, the weights at the end of this phase carry a nontrivial signal on the support and little signal outside it. As explained, for convenience, we set $\alpha = 1$ in the TD loss for this step and perform SGD with respect to $\cL_1^{\rm TD}$.

        \item In the second phase, the first layer is kept fixed and only $a$, the weights of the second layer, are trained using $T_2$ steps of SGD. At this point, once the relevant support information has been extracted, the remaining problem is essentially a low-dimensional regression problem. In this phase, it is important to take $\alpha < 1$ to ensure proper learning of the function $f$, and so we set $\alpha = 0$ and perform standard SGD steps for $L^{\rm TD}_0$, which is the $L_2$ loss. To make sure the resulting regression problem is well-posed, we add the regularization term $\frac{\lambda}{2}\|a\|^2$. Remark that, as we explain below, we could remove the regularization if, instead of taking the terminal weight $a^{(T_2)}$, we would instead average the weights along the process.
    \end{itemize} 
    Before the first phase, we initialize for all $i \in [N], j \in [d]$: 
   $w^{(0)}_{ij} = 0,\  a_i^{(0)} = \kappa,\  b_i^{(0)}=1/N,$  
    for some $\kappa>0$. After the first phase of training, we redraw the bias neurons from $b_i \overset{iid}{\sim} \Unif[-A,A]$, for an appropriate $A>0$.
	\begin{algorithm}[t]
		\caption{\textit{Layerwise SGD with temporal difference loss}\\ Init. scale $\kappa>0$, bias range $A$, learning rates $\gamma_1, \gamma_2$, step counts $T_1, T_2$, batch size $B$, and regularization $\lambda > 0$.} 
		\label{alg:layerwise-sgd}
		\begin{algorithmic}[1]
			\Require Training data $\{(x_t, y_t)\}_{t \ge 1}$, model: $\NN(x;(w,a,b)) = \sum_{i=1}^N a_i \sigma(w_i^\top x +b_i)$.
			\State {\bfseries Initialize}: weights $w^{(0)} =0$, $a^{(0)} = \kappa$, and biases $b^{(0)} =1/N$.
			\vspace{0.5em}
			\Statex \textbf{Phase 1: Train the first layer (second layer frozen)}
			\For{$t=0$ {\bfseries to} $T_1-1$ {\bfseries :}} 
			\State Take mini-batch $S^{(t)}=\{(x_s, y_s)\}_{s=1}^B$
			\State Update 
			\(
			w^{(t+1)} \leftarrow w^{(t)} - \gamma_1 \nabla_{w^{(t)}} \cL^{\rm TD}_{1}(S^{(t)};w^{(t)},a^{(0)},b^{(0)})
			\)
			\EndFor
			\vspace{0.5em}
			\Statex \textbf{Phase 2: Train the second layer (first layer frozen)}
			\State Redraw random biases: $\hat b_1,\dots,\hat b_N \sim \Unif[-A,A]^{\otimes N}$
			\For{$t=T_1$ {\bfseries to} $T_2-1$ {\bfseries :}}
            \State Set $S^{(t)} = (x_t,y_t)$
			\State Update $a^{(t+1)} \leftarrow a^{(t)} - \gamma_2 \nabla_{a}\left( \cL^{\rm TD}_{0}(S^{(t)};w^{(T_1)},a^{(t)},\hat b) + \frac{\lambda}{2}\|a^{(t)}\|_2^2\right)$
			\EndFor
			\vspace{0.5em}
			\State \textbf{Output:} Trained model $\NN(x;(w^{(T_1)},a^{(T_2)},\hat b))$
		\end{algorithmic}
	\end{algorithm}

	\paragraph{Main Result}
    To state our main result for Algorithm \ref{alg:layerwise-sgd} we need to introduce one final concept, the influences of $f$. If $f:\{\pm 1\}^d \to \{\pm 1\}$ the influence of $f$ at direction $i$ is $\mathrm{Inf}_i(f) := \mathbb{P}\left(f(x)\neq f(x^{(i)})\right)$, where the probability is over the uniform measure on $\{\pm 1\}^d$ and $x^{(i)}$ is obtained from $x$ by flipping the $i^{\mathrm{th}}$ bit.  More generally, write any $f:\{\pm 1\}^d \to \mathbb{R}$ in the Fourier-Walsh monomial basis $f(x) = \sum_{A \subset [d]} \hat{f}_A x^A$. Then, $\mathrm{Inf}_i(f) = \sum\limits_{i \in A}\hat{f}_A^2$ generalizes the previous definition from Boolean-valued to real-valued functions. In that sense, the influence of $f$ in direction $i$ measures the sensitivity of $f$ to flipping the $i^{\mathrm{th}}$ bit. In order to learn $f$ from the random-walk trajectory, we require that the influences of $f$ are not too small for directions in its support. Thus, if $S$ is the support of $f$, we will henceforth denote $\tau = \min_{i\in S}\mathrm{Inf}_i(f)$. All constants below will implicitly depend on $\tau$ as well as on $\|f\|_{\infty}$.
	\begin{theorem} \label{thm:main_juntas_fd}
		Let $\{(x^{(t)},y^{(t)})\}_{t \in [T]}$, for $T \in \bN$, be such that $\{x^{(t)}\}_{t \in [T]}$ is generated by the $\frac{1}{2}$-lazy random walk on the $d$-dimensional hypercube, as in Def.~\ref{def:lazy-rw}, and $y^{(t)}=f(x^{(t)})$, where $f$ is a $k$-junta, with $k = O_d(1)$. Consider training a 2-layer $\ReLU$ network with $N =\Theta\left(\frac{\log(d)2^kA}{\varepsilon}\right)$ hidden neurons using the layer-wise SGD with the finite difference loss,  Algorithm~\ref{alg:layerwise-sgd}. 
        Then, there exists a constant $C > 0$, which can only depend on $k$, $\|f\|_{\infty}$, and $\min_{i\in S}\Inf_i(f)$, such that for all small enough $\varepsilon, \delta>0$ if we set:
        \begin{itemize}
            \item At initialization, the scale $\kappa =\Theta_d(1)$ and the bias range $A = \Theta_k\left(\frac{\|f\|_{\infty}}{\varepsilon}\right)$.
            \item In Phase I, the batch size  is $B = \Omega_k\left(\frac{d}{\varepsilon^2}\right)$, and the step size is $\gamma_1 = \frac{\sqrt{2Bd}}{\kappa\sqrt{k}}$, with $T_1 = 1$.
            \item In Phase II, the step size is $\gamma_{2} = \tilde{O}_k\left(\frac{\delta^2\varepsilon^4}{d}\right)$ with regularization $\lambda = \Theta_k\left(\delta\varepsilon^2\right)$ and $T_2 = \tilde\Omega_k\left(\frac{d}{\varepsilon^{18}\delta^3}\right)$.
        \end{itemize}
        Then, with probability at least $1 - C\varepsilon$ over the randomness of Phase I and $\hat{b}$, the algorithm will output a network such that:
		\begin{align}
			\E_{x \sim \Unif \{ \pm 1 \}^d} \left[  (\NN(x; \theta^{(T_1+T_2)}) - f(x))^2\right] < \delta.
		\end{align}
	\end{theorem}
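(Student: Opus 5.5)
We would follow the two-phase structure of Algorithm~\ref{alg:layerwise-sgd}. The proof splits into a support-recovery analysis of the single large-batch step of Phase~I and a convex-regression analysis of Phase~II. The link between them is a random-features approximation lemma for juntas on the recovered coordinates.

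\emph{Phase I.} At initialization $w=0$, $b_i=1/N>0$, so every neuron is in its linear regime: $\ReLU(w_i^\top x+b_i)=w_i^\top x+b_i$ and $\Delta\NN^{(t)}=0$. The gradient of $\cL_1^{\rm TD}$ with respect to $w_i$ on a pair is therefore $-\kappa\,\Delta f^{(t)}\Delta x^{(t)}$, the same for all neurons. Hence after one step $w_i^{(1)}=\frac{\gamma_1\kappa}{B}\sum_{s}\Delta f^{(s)}\Delta x^{(s)}=:v$ for all $i$.

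Since only coordinate $j_s$ can move, $\Delta f^{(s)}\Delta x^{(s)}_j=0$ unless $j_s=j$ and $Z_s=1$. In that case $\Delta x_j=-2x_j$ and $\Delta f=-2x_j\partial_j f(x)$, so the summand equals $4\,\partial_j f(x)$, with $\partial_j f=\sum_{A\ni j}\hat f_A x^{A\setminus j}$.

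For $j\notin S$, $v_j=0$ exactly. For $j\in S$, the coordinate $v_j$ is a sum of roughly $B/(2d)$ terms, each $\pm$-type with second moment $16\,\Inf_j(f)\ge16\tau$. A first-moment signal need not exist (parities), which is the second-moment effect. We still need $|v_j|$ bounded below, which I would obtain via anticoncentration rather than a mean.

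The Markov dependence along the trajectory has to be handled. The plan is to condition on the flip times and positions, and use that consecutive visits to the relevant coordinates mix in $O(k)$ relevant flips. A martingale CLT (or Berry--Esseen for the Markov chain restricted to $x_S$, a finite-state chain on $2^k$ states) then shows $v_j/\sqrt{B/d}$ is approximately Gaussian with variance $\asymp\Inf_j(f)$. So with probability $\ge1-C\varepsilon$ every $|v_j|$, $j\in S$, lies in $[\varepsilon,1/\varepsilon]\cdot\sqrt{B/d}$; the chosen $\gamma_1=\sqrt{2Bd}/(\kappa\sqrt k)$ normalizes $v$ to order one per coordinate. I expect this anticoncentration for a dependent sum to be the main obstacle. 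It is also where the $1-C\varepsilon$ probability enters, since a Gaussian lands within $\varepsilon$ of $0$ with probability $O(\varepsilon)$.

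\emph{Approximation.} After Phase~I all neurons share the direction $v$, supported on $S$ with entries bounded away from $0$ and $\infty$, and the biases are redrawn uniformly in $[-A,A]$. The key observation is that generically the map $x_S\mapsto v^\top x$ is injective on $\{\pm1\}^k$. Failure means $\sum_{j\in S}v_j\epsilon_j=0$ for some nonzero $\epsilon\in\{0,\pm1\}^k$, and the anticoncentration above, applied to such linear combinations, should give separation at least $\varepsilon$ between the $2^k$ values with probability $1-C\varepsilon$. Then $f(x)=g(v^\top x)$ for a one-dimensional function $g$ on $2^k$ separated points. Such a $g$ is represented by a piecewise-linear interpolant, i.e.\ a combination of $\ReLU(z+b)$ with at most $2^k$ kinks, with coefficients controlled by $\|f\|_\infty$ divided by the separation. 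This explains $A\sim\|f\|_\infty/\varepsilon$.

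With $N=\Theta(\log(d)2^kA/\varepsilon)$ random biases, a covering argument over $[-A,A]$ shows each needed kink has a random bias within $\varepsilon$-neighborhood with high probability. This yields $a^\star$ with $\|a^\star\|^2\le C$ and $\E(\NN_{a^\star}-f)^2\le\delta/2$.

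\emph{Phase II.} With the features fixed, the objective is a $\lambda$-strongly convex quadratic in $a$. Using $a^\star$, the regularized population minimizer $a_\lambda$ has excess risk at most $\lambda\|a^\star\|^2\lesssim\delta$ for $\lambda=\Theta(\delta\varepsilon^2)$.

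The samples are Markovian, so we invoke SGD-with-Markov-noise bounds in the style of \cite{sun2018markov,even2023stochastic}. The lazy walk's mixing time is $O(d\log d)$, but the features depend only on $x_S$, whose restricted chain mixes in $O(d\log k)$ steps. The bias from dependence is therefore controlled by step size times mixing time. Combining $\gamma_2\lesssim\delta^2\varepsilon^4/d$ with contraction rate $\gamma_2\lambda$ gives convergence once $T_2\gtrsim(\gamma_2\lambda)^{-1}\log$, consistent with the stated $T_2$.

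Finally, the test error under the uniform measure equals the error under the stationary law of the chain, and we conclude by a union bound over the failure events.
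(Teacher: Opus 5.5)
Your outline follows the paper's route step for step: the exact Phase~I gradient supported on $S$, anticoncentration of $\sum_{j\in S}w^{(1)}_j(s_j-r_j)$ via a Berry--Esseen bound for the chain restricted to $x_S$, a random-bias ReLU certificate with coefficients $O(\|f\|_\infty/\varepsilon)$, and Markov-chain SGD on a ridge-regularized regression. The gap is in the step you yourself flag as the crux.

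A Berry--Esseen bound for an additive functional $\sum_u\varphi(Z_u)$ of a Markov chain (the paper uses Kloeckner's effective CLT on the $p$-lazy edge walk of $\{\pm1\}^k$) only gives anticoncentration relative to the \emph{dynamical} variance
\[
\sigma^2=\Var\varphi(Z_1)+2\sum_{u\ge1}\Cov\bigl(\varphi(Z_1),\varphi(Z_{1+u})\bigr),
\]
and its error term blows up like $\sigma^{-3}T^{-1/2}$. You assert $\sigma^2\asymp\Inf_j(f)$ from the one-step second moment $16\,\Inf_j(f)$, but that does not follow, because lag covariances can cancel the marginal variance. For example, a coboundary $g(y)-g(x)$ has positive one-step variance, yet its partial sums telescope, so $\sigma^2=0$ and there is no anticoncentration at scale $\sqrt{B/d}$. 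A martingale-CLT route hits the same wall, since the martingale part obtained from the Poisson equation has variance exactly $\sigma^2$.

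The missing idea is the paper's Lemma~\ref{lem:Yvar}. The functional $\varphi_\epsilon(x,y)=(f(y)-f(x))\sum_j\epsilon_j(y_j-x_j)$ is symmetric, $\varphi_\epsilon(x,y)=\varphi_\epsilon(y,x)$. By reversibility, $\E[\varphi_\epsilon(X_0,X_1)\mid X_1]=h(X_1)$, where $h(x)=\E[\varphi_\epsilon(X_1,X_2)\mid X_1=x]$ is the same function that governs the next step. The Markov property then gives
$$\Cov(\varphi_\epsilon(Z_1),\varphi_\epsilon(Z_{1+u}))=\E\bigl[(h-\bar h)(X_1)\,P^{u-1}(h-\bar h)(X_1)\bigr]\ge 0,$$
because the $p$-lazy kernel is positive semidefinite for $p\le 1/2$. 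Hence $\sigma^2\ge\Var\varphi_\epsilon(Z_1)$. The squared mean is only $O(p^2)$ (by Lemma~\ref{lem:Yexpec} and Cauchy--Schwarz), so subtracting it gives $\sigma^2\ge c\,\frac{p}{k}\sum_{j:\epsilon_j\neq0}\Inf_j(f)\ge c\,\frac{p}{k}\tau$.

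With this bound in hand, your stronger demand of injectivity on all of $\{\pm1\}^k$ (the paper only separates pairs with $\tilde f(s)\neq\tilde f(r)$) is harmless, since every nonzero $\epsilon$ picks up some $\Inf_j(f)\ge\tau$.

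Separately, your reduction as literally stated fails. If you condition on both the flip times and which coordinates flip, $v$ becomes a deterministic function of $x^{(0)}_S$ and cannot be approximately Gaussian. You should condition only on the number $B_k$ of steps whose proposed coordinate lies in $S$, keeping the coordinate choice random. That leaves exactly the $p$-lazy edge walk on $\{\pm1\}^k$ that the paper analyzes.
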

    To understand the sample complexity of Algorithm \ref{alg:layerwise-sgd}, we note that Phase I uses a single batch of size $B = \Omega_{k}\left(d\right)$, while Phase II requires $T_2 = \tilde\Omega_k\left(d\right)$ different steps of the random walk. Taken together, and ignoring the dependence on $\varepsilon$ and $\delta$, the algorithm requires running the random walk for $\tilde\Omega_k(d)$ steps. As mentioned above, a simple coupon-collector argument shows that this bound is near optimal. We also note that the dependence on \(k\) is exponential, as is natural for arbitrary \(k\)-juntas: even once the support is known, one must learn an arbitrary Boolean function on \(2^k\) input patterns. This means that, in principle, we could allow $k = c\log(d)$, for a small enough constant $c>0$, and still obtain effective results.
    \paragraph{Ideas of the proof:}
The proof has three main components, corresponding to the two phases of the algorithm and to the non-degeneracy needed to connect them. To understand the first phase, consider the case in which the step from $x_t$ to $x_{t+1}$ flips a coordinate outside the support $S$. In this case, $f(x_{t+1})-f(x_t)=0$, and so the TD loss mostly reacts to moves in relevant coordinates. From this we show that the gradient updates $\nabla_{w^{(t)}}\cL^{\rm TD}_{1}(S^{(t)};w^{(t)},a^{(0)},b^{(0)})$ are essentially supported on $S$. Thus, at the end of Phase $1$, the algorithm has extracted the relevant support information, since $w^{(T_1)}$ is concentrated on the coordinates in $S$.

Given the weights $w^{(T_1)}$ produced after Phase $1$, the second layer is now a linear combination of neurons which essentially depend only on the coordinates in $S$. In light of this, the idea in Phase $2$ is to train the second layer to solve a linear regression problem on the support variables. However, there is an important caveat here. For this regression problem to be well-defined, and for a solution to exist, we must be able to represent $f$ using the weights $w^{(T_1)}$, suitable biases, and the ReLU function. Addressing this point is the most delicate and technical part of the proof, since, as we show, there exist possible choices of $w^{(T_1)}$ which, on one hand, reveal the relevant coordinates, yet cannot represent the function $f$. To circumvent this issue, we prove that these potential bad weights are only encountered with small probability. Thus, we establish an anti-concentration estimate for the batched first-layer update, viewed as an additive functional of the lazy random walk. Our key technical contribution, which may be of independent interest elsewhere, for this problem is a quantitative central limit theorem, based on \cite{kloeckner2019effective}, which allows us to compare this update to a Gaussian and rule out the bad configurations in which the regression problem becomes ill-posed.

The final component of the proof focuses on efficiently solving the resulting linear regression problem. Most results in the literature assume that the samples are drawn i.i.d. from some underlying distribution, whereas in our setting, the samples still follow a random walk. Thus, even after establishing well-posedness, we still need to analyze the complexity of SGD in the presence of temporal correlations. This is done by adapting recent arguments on SGD under Markovian sampling from \cite{even2023stochastic} for a strongly convex target function. This is facilitated by the strongly convex regularization we add in Phase II. Alternatively, we could appeal to the results of \cite{sun2018markov}, which does not require strong convexity, and so does not require the regularization, but only applies to the average weight, rather than the terminal point.

    \section{Noisy SGD Lower Bounds for Standard Losses}
    \label{sec:lower_bound}
    
  In this section, we prove a negative result for SGD with large batch size when training with any convex \emph{pointwise} loss, such as the square loss or the hinge loss. This further highlights the role of the temporal-difference loss in Theorem~\ref{thm:main_juntas_fd}. Here, by a pointwise loss, we mean a loss that evaluates the discrepancy between the predictor and the target on a single example. 
  The lower bound applies to the class of \textit{noisy} gradient-based methods, considered also in previous works~\cite{AS20,malach2021quantifying}, which we formally introduce in Definition~\ref{def:noisy-SGD}.
      \begin{definition}[Noisy-SGD]\label{def:noisy-SGD}
    Let $\NN(\cdot;\theta)$ be a neural network with initialization $\theta^{(0)}$, and let
\(
\{\xi^{(t)}\}_{t\in[T]}
\)
be a training sequence, where each $\xi^{(t)}$ is a sample (or tuple of samples) from which the loss is computed. This formulation covers both standard pointwise losses, where $\xi^{(t)}=(x^{(t)},f(x^{(t)}))$, and temporal-difference losses, where $\xi^{(t)}$ may contain two consecutive samples, e.g.\
\(
\xi^{(t)}=(x^{(t-1)},x^{(t)},f(x^{(t-1)}),f(x^{(t)})).
\)
Let $L(\theta;\xi)$ be an almost surely differentiable loss evaluated at parameter $\theta$ and sample $\xi$, and let $R(\theta)$ be an almost surely differentiable regularizer. For $\lambda\ge 0$, define
\(
L_\lambda(\theta;\xi):=L(\theta;\xi)+\lambda R(\theta).
\) Given learning rates $\{\gamma_t\}_{t\ge 0}$, batch size $B$, gradient range $A>0$, and noise level $\tau\ge 0$, the updates of noisy SGD are defined by
\begin{align}\label{eq:noisy-SGD}
\theta^{(t+1)}
=
\theta^{(t)}
-
\gamma_t\left(
\frac1B\sum_{s=1}^B
\bigl[\nabla_\theta L_{\lambda}(\theta^{(t)};\xi^{(t,s)})\bigr]_A
+
Z^{(t)}
\right),
\end{align}
where for all $t \in \{0,\ldots,T-1\}$, $Z^{t}\sim \cN(0,\tau^2 I_d)$, for some \emph{noise level} $\tau$, and they are independent from other variables, $B$ is the batch size, and $[z]_A : = \argmin_{|y|\leq A} |y-z| $, i.e., whenever the gradient exceeds $A$ (resp. $-A$) it is rounded to $A$ (resp. $-A$). 
    \end{definition}

To establish our result, we extend the lower bound of \cite[Theorem 3]{AS20}, originally proved for i.i.d.\ data, to the case where the training samples are generated by a random walk. Such lower bound holds for \textit{classes of functions}, which leads us to define the orbit of a fixed function. For a Boolean function $f : \{ \pm 1 \}^d \to \bR$, and for a permutation $\pi\in \cS_d$, define the action of $\pi$ on $x\in\{\pm1\}^d$ by
\(
(\pi\cdot x)_i := x_{\pi(i)},\) \(i\in[d].
\)
The \emph{orbit} of $f$ under coordinate permutations is:
\[
\orb(f)
:=
\{\, f\circ (\pi\cdot)\;:\;\pi\in \cS_d \,\}.
\]
\noindent 
We will also use the following notion of cross-predictability, introduced in \cite{AS20}.
\begin{definition}[Cross-Predictability (CP),\cite{AS20}] \label{def:CP}
    Let $P_{\cF}$ be a distribution over functions from $\{ \pm 1 \}^d $ to $\bR$ and $P_{\cX}$ be a distribution over $\{ \pm 1 \}^d$. Then, 
       $ \CP(P_\cX, P_\cF) = \E_{F,F' \sim P_{\cF}} \left[ \E_{x\sim P_{\cX}} [F(x) F'(x)]^2 \right].$
\end{definition}

\noindent 
We may now state our theorem.

\begin{theorem}[Noisy-SGD with large random-walk batches]  
\label{thm:lower-bound}
for $T \in \bN$ let $(x^{(s)}, y^{(s)})_{s \in [m]}$ be such that $\{x^{(s)}\}_{s \in [m]}$ is generated by the $p$-lazy random walk on $\{\pm 1\}^d$ (see Def.~\ref{def:lazy-rw}), and $y^{(s)} =f(x^{(s)})$, for $f: \{ \pm 1 \}^d \to \{\pm 1 \}$ a balanced Boolean function, i.e. such that $\E_{x \sim \cU_d} [f(x)] = o_d(1)$, where $\cU_d := \Unif \{ \pm 1 \}^d$. Consider training a fully connected neural network of size $M$ and i.i.d. initialization with noisy-SGD (see Def.~\ref{def:noisy-SGD}) with any pointwise convex and almost surely differentiable loss, noise level $\tau>0$, gradient range $A$, and batch size $B$. Then, after $T$ steps of training, 
\begin{align} \label{eq:lower_bound}
    \pr\left(\sign(\NN(x;\theta^{(T)}) ) = f(x) \right) \leq \frac 12 + \frac{T \sqrt{M}A}{2\tau} \left( \CP(\cU_d,\orb(f)) + \frac{d}{pB}\right)^{1/4}.
\end{align}

\end{theorem}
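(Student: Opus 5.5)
We follow the proof of \cite[Theorem 3]{AS20}. The idea is to compare the real training run with a "junk" run in which the labels carry no information about the target. Let $F$ be drawn uniformly from $\orb(f)$, and train on labels $F(x^{(s)})$. As reference, consider noisy-SGD with the same initialization, the same random-walk inputs, and the same Gaussian noise, but where every clipped gradient is replaced by its average $\bar g_t(\theta)$ over $F\sim\orb(f)$. We write the two runs as $\theta^{(t)}$ and $\bar\theta^{(t)}$.

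\textbf{Step 1: total variation.} Because of the Gaussian noise $Z^{(t)}$, one step of either process, given the current parameters, is a Gaussian shift. By Pinsker, or by the Gaussian TV bound, the per-step total variation is at most $\frac{\gamma_t}{2\tau}\cdot\frac{1}{\gamma_t}\|G_t(\theta)-\bar g_t(\theta)\|$, where $G_t$ is the clipped batch gradient. A hybrid argument across steps then gives
\[
\TV\bigl(\theta^{(T)},\bar\theta^{(T)}\bigr)\le \sum_{t<T}\frac{1}{2\tau}\,\E\,\|G_t(\bar\theta^{(t)})-\bar g_t(\bar\theta^{(t)})\|.
\]
The reference process is independent of $F$, so its output predicts $F(x)$ with probability about $\tfrac12$, using balancedness and $\orb(f)$-averaging. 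The output of the real run therefore predicts with probability at most $\tfrac12$ plus this TV distance. This reduces the theorem to a per-step bound for any fixed $\theta$ independent of $F$:
\[
\E_F\|G_t(\theta)-\bar g_t(\theta)\|\le \sqrt{M}A\bigl(\CP+\tfrac{d}{pB}\bigr)^{1/4}.
\]
By Cauchy--Schwarz over the $M$ coordinates, it suffices to prove, for each coordinate $e$,
\[
\E_F\,(G_{t,e}-\bar g_{t,e})^2\le A^2\bigl(\CP+\tfrac{d}{pB}\bigr)^{1/2}.
\]

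\textbf{Step 2: pointwise structure.} For a pointwise loss $\ell(\NN(x;\theta),y)$ with $y\in\{\pm1\}$, we can write
\[
\partial_{\theta_e}\ell=u_e(x)+v_e(x)\,y,
\]
where $u_e=\tfrac12(\partial\ell(\cdot,1)+\partial\ell(\cdot,-1))\,\partial_e\NN$ and $v_e$ is the analogous half-difference, both bounded via clipping. Only the $v_e(x)F(x)$ part depends on $F$. Because of the clipping $[\cdot]_A$, this exact decomposition is not immediate. As in \cite{AS20}, we handle it either by clipping the two branches separately or by a $1$-Lipschitz comparison. I expect this to be a technical nuisance rather than the core difficulty.

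The $F$-dependent fluctuation of the batch average is then
\[
\frac1B\sum_s v_e(x^{(s)})\bigl(F(x^{(s)})-\bar F(x^{(s)})\bigr).
\]
We split its second moment into the uniform-population part and the empirical deviation.

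\textbf{Step 3: population term.} Set $\Phi(F)=\E_{x\sim\cU_d}[v_e(x)F(x)]$. The inequality $\Var_F\Phi(F)\le \|v_e\|_2^2\sqrt{\CP}$, up to constants, is exactly the \cite{AS20} computation, using $\E_F[\Phi(F)^2]=\langle v_e, K v_e\rangle$ with kernel $K(x,x')=\E_F F(x)F(x')$, together with $\|K\|_{op}\le\|K\|_{HS}=\sqrt{\CP}$. The stationary distribution of the walk is $\cU_d$, so this part is unchanged.

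\textbf{Step 4: Markov concentration (main obstacle).} We need to bound
\[
\E\Bigl(\frac1B\sum_s h(x^{(s)})-\E_{\cU_d}h\Bigr)^2\quad\text{for } h=v_eF,\ |h|\le A.
\]
For the $p$-lazy walk, the transition operator has spectral gap $2p/d$: on Walsh characters of degree $j$ the eigenvalue is $1-2pj/d$, and all eigenvalues are nonnegative for $p\le 1/2$; for larger $p$ one would use the absolute spectral gap. We start from stationarity, or else bound the burn-in separately. The standard variance bound for stationary reversible chains then gives
\[
\E(\cdot)^2\le \frac{1}{B}\cdot\frac{1+\lambda}{1-\lambda}\,\Var(h)\le \frac{A^2 d}{pB}.
\]
Combining this with Step 3 and $\sqrt{a}+\sqrt{b}\lesssim\sqrt{a+b}$ gives the coordinate bound $A^2(\CP+\tfrac{d}{pB})^{1/2}$ at the level of square roots. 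We then plug into Step 1 and sum over $T$ steps. The delicate points are these: the batch within a step is a consecutive trajectory segment and must be handled uniformly over $\theta$; and $\theta$ is independent of the current batch only in the reference process, which is why the analysis is run on $\bar\theta^{(t)}$ and we condition on the past trajectory via the Markov property. Tracking the constants gives the stated $\frac{T\sqrt MA}{2\tau}(\cdot)^{1/4}$.
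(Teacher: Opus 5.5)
There is a genuine gap in Step~4. The bound $\E\bigl(\frac1B\sum_s h(x^{(s)})-\E_{\cU_d}h\bigr)^2\le A^2\frac{d}{pB}$ is the variance bound for a \emph{stationary} reversible chain. You apply it to $h=v_eF$, where $v_e$ is determined by $\bar\theta^{(t)}$. Your justification is that $\bar\theta^{(t)}$ is independent of the current batch, but with random-walk data this is false. $\bar\theta^{(t)}$ is a function of the previous batches, and the current batch continues the same trajectory from $x_{tB}$, the last point of the previous batch. This independence is exactly what \cite{AS20} gets for free from fresh i.i.d.\ batches.

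Conditioning on the past via the Markov property leaves you with a walk started from a fixed point that may be correlated with $v_e$. From a fixed start, the empirical mean of a bounded function can be biased by order $A\min\{1,\frac{d\log d}{pB}\}$. For example, $h(x)=A\,\sign(\langle x,x_{tB}\rangle)$ has $\E_{\cU_d}h\approx 0$, but the walk needs about $\frac{d\log d}{4p}$ steps before $\langle x^{(s)},x_{tB}\rangle$ drops to the $\sqrt d$ scale. So your inequality can fail for batch sizes between order $d/p$ and order $d\log^2 d/p$, and a burn-in argument recovers it only up to extra logarithmic factors. Handling it ``uniformly over $\theta$'' does not help either: the supremum of the deviation over all $|v|\le A$ is of order $A$.

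The missing idea, which is how the paper proceeds, is to average over $F$ \emph{before} any Markov-chain concentration, so that concentration is needed only for a $\theta$-free observable. Let $a_i=v_e(x^{(i)})$ be your half-difference of the clipped gradient, with $|a_i|\le A$ for every $\theta$, and let $K(x,x')=\E_F F(x)F(x')$. Then the $F$-averaged squared discrepancy in coordinate $e$ is at most $\frac1{B^2}\sum_{i,j}K(x_i,x_j)a_ia_j$. The Frobenius Cauchy--Schwarz inequality bounds this \emph{pointwise} by $\bigl(\frac1B\sum_i a_i^2\bigr)\bigl(\frac1{B^2}\sum_{i,j}K(x_i,x_j)^2\bigr)^{1/2}$. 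The second factor depends only on the batch. Its marginal law is a stationary walk segment no matter how $\theta$ depends on the past, and its expectation is $\E_{F,F'}\E\bigl[(\frac1B\sum_i F(x_i)F'(x_i))^2\bigr]$. The stationary variance bound, applied to $h=FF'$, then gives $\CP+\frac{d}{pB}$.

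Your Step~2 concern also disappears. Since $y\in\{\pm1\}$, the clipped gradient decomposes exactly as $\frac{g(x,1)+g(x,-1)}{2}+y\,\frac{g(x,1)-g(x,-1)}{2}$.
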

The theorem establishes the potential hardness of weak learning, i.e. of obtaining a predictor that achieves accuracy better than a random guess. The statement applies to arbitrary balanced Boolean functions, but it is especially meaningful when the function class is known to be hard to weakly learn, i.e. when the cross-predictability of the orbit is small, such as for parities. In such cases, Theorem~\ref{thm:lower-bound} states that if the batch size is large, so that the right-most factor in~\eqref{eq:lower_bound} is small, then noisy-SGD with bounded gradient precision $A/\tau$ needs either a large number of training iterations $T$, or a network of large size $M$, to perform better than random guess.

\begin{remark}[Large batch size]
The lower bound applies only in the large-batch regime. The intuition is that, when each SGD step averages gradients over many consecutive samples from the random-walk trajectory, the temporal correlations within the batch are largely washed out, provided they decay sufficiently fast. As a result, the averaged gradient becomes close to the one that would be obtained from the stationary distribution of the walk, which in our setting is the uniform distribution on the hypercube. The additional term \(d/(pB)\) measures the residual dependence between samples within a batch drawn from the random walk, which is negligible only if $B$ is large. Therefore, large-batch SGD with a pointwise loss does not significantly exploit temporal correlations, and one recovers essentially the same hardness barrier as in the i.i.d.\ uniform setting.
\end{remark}

Let us now consider the case of parities. Recall that Theorem~\ref{thm:main_juntas_fd} shows that, for every fixed \(k\), a two-layer ReLU network of size \(M=\Theta(d)\), trained by Algorithm \ref{alg:layerwise-sgd} with batch size \(B=\Omega_k(d)\), learns any \(k\)-parity in \(O_k(d)\) steps. In contrast, consider SGD with a standard pointwise loss on the same network of size \(M=\Theta(d)\), with constant gradient precision \(A/\tau=O_d(1)\), constant laziness parameter \(p=\Theta(1)\), and batch size \(B=d^c\), for some $c>0$. Since the orbit of a fixed \(k\)-parity is the class of all \(k\)-parities, \(\CP(\cU_d,P_k)=\binom{d}{k}^{-1}\). Therefore, Theorem~\ref{thm:lower-bound} implies that any such method achieving constant advantage \(\eta>0\) over random guessing must satisfy \(T\ge \Omega_{k,\eta}\left(d^{\frac{\min\{k,c-1\}}{4}-\frac12}\right)\). In particular, if \(\min\{k,c-1\}>6\), then pointwise-loss SGD requires superlinear time to beat random guessing by a constant amount. 

   \begin{figure}[b]
        \centering
        \includegraphics[width=0.49\linewidth]{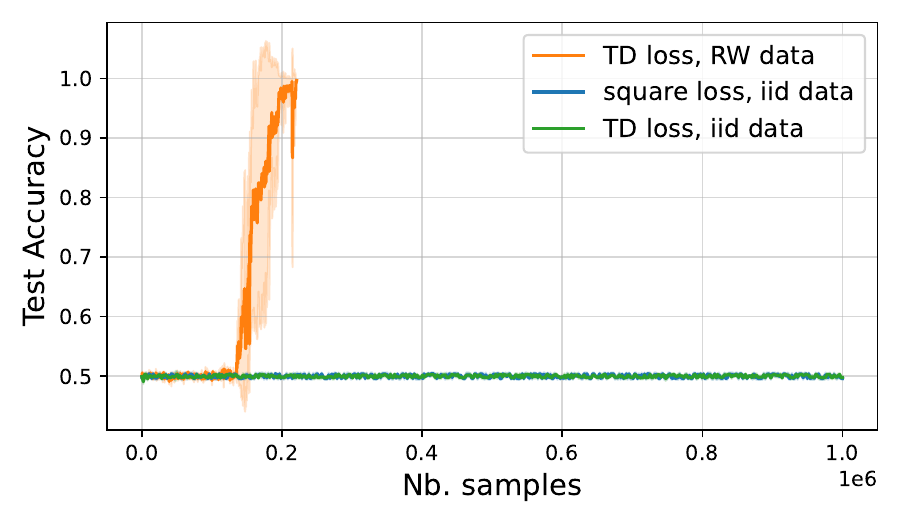}
        \includegraphics[width=0.49\linewidth]{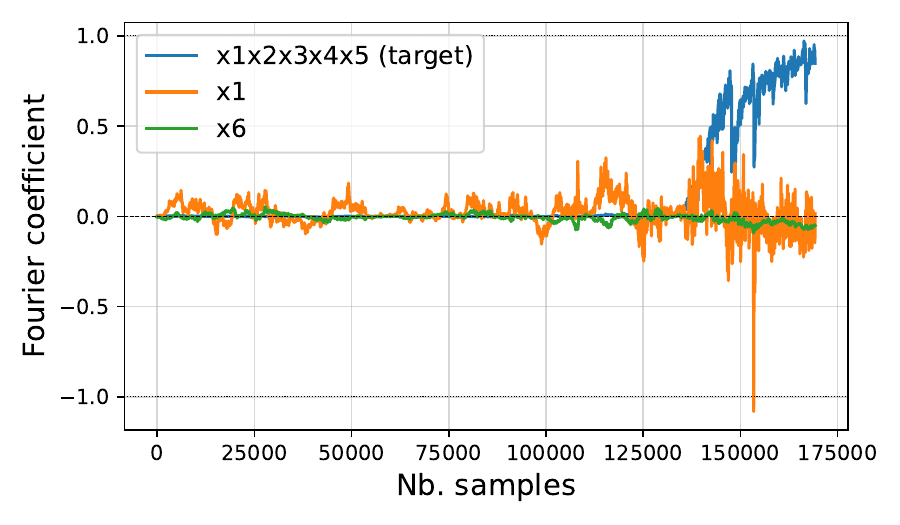}
        \caption{\(5\)-parity with \(d=50\). We train a 4-layer MLP with batch size \(1\), learning rate \(0.005\), TD parameter \(\alpha=0.9\), and random-walk flip probability \(p=0.9\). Left: test accuracy for random-walk data with TD loss, compared with i.i.d. data using either TD loss or square loss. Right: selected Fourier-Walsh coefficients of the learned predictor for random-walk data with TD loss.}
        \label{fig:parity5_TDRW}
    \end{figure}

\section{Numerical Experiments} \label{sec:numerical}
To demonstrate our theoretical results, we ran several experiments in dimension $d=50$. In all experiments we train a 4-layer MLP with $512,1024,64$ hidden neurons and $\ReLU$ activation, by SGD with all layers trained jointly, batch size $1$ and learning rate $0.005$. We use either the temporal difference (TD) loss (Def.~\ref{def:TDloss}) with $\alpha=0.9$ or the standard square loss. The inputs are either sampled online i.i.d. from the uniform distribution on the hypercube, or generated by a lazy random walk with $p = 0.9$. The test loss and accuracy are computed on the standard uniform distribution. We repeat each experiment for $5$ seeds, and we report one standard deviation intervals.

In Figure~\ref{fig:parity5_TDRW}, we consider learning the \(5\)-parity \(f(x)=x_1x_2\cdots x_5\). The left panel reports test accuracy as a function of the number of fresh samples observed during training. With random-walk data and the TD loss, the network reaches high test accuracy with moderate sample complexity, whereas the corresponding i.i.d. baselines fail to learn anything significant. The right panel tracks selected Fourier-Walsh coefficients of the learned predictor during training, for the random-walk experiment with TD loss. Specifically, we plot \(\E_x[\NN(x;\theta^t)x_1]\) in orange, \(\E_x[\NN(x;\theta^t)x_6]\) in green, and the target coefficient \(\E_x[\NN(x;\theta^t)\prod_{i=1}^5 x_i]\) in blue, where expectations are taken over the uniform distribution. The first-order coefficients remain centered around zero, both for a coordinate inside the support (\(x_1\)) and outside the support (\(x_6\)). However, the in-support coefficient (orange) exhibits larger fluctuations than the out-of-support one (green). This variance anisotropy is consistent with our theoretical analysis: the support information is not visible through first-order means, but appears through second-moment effects induced by the random-walk dynamics.

    \begin{figure}[t]
        \centering
        \includegraphics[width=0.49\linewidth]{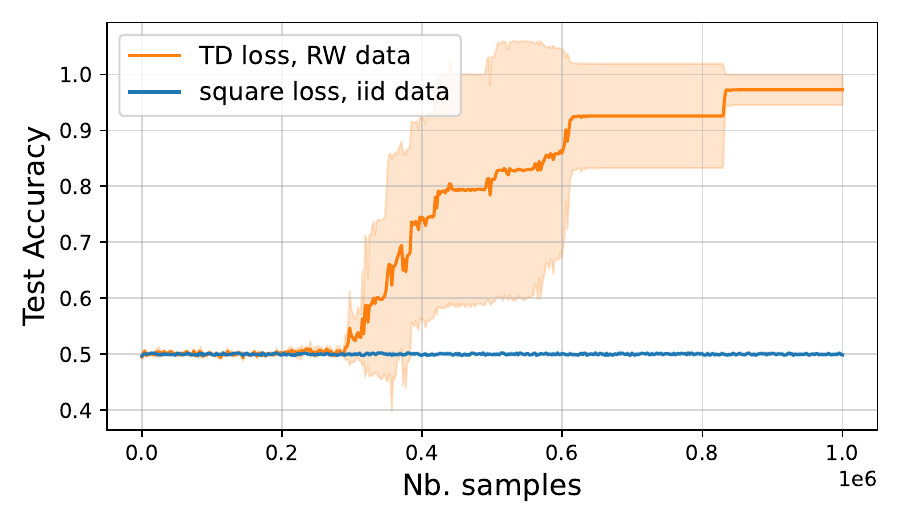}
        \includegraphics[width=0.49\linewidth]{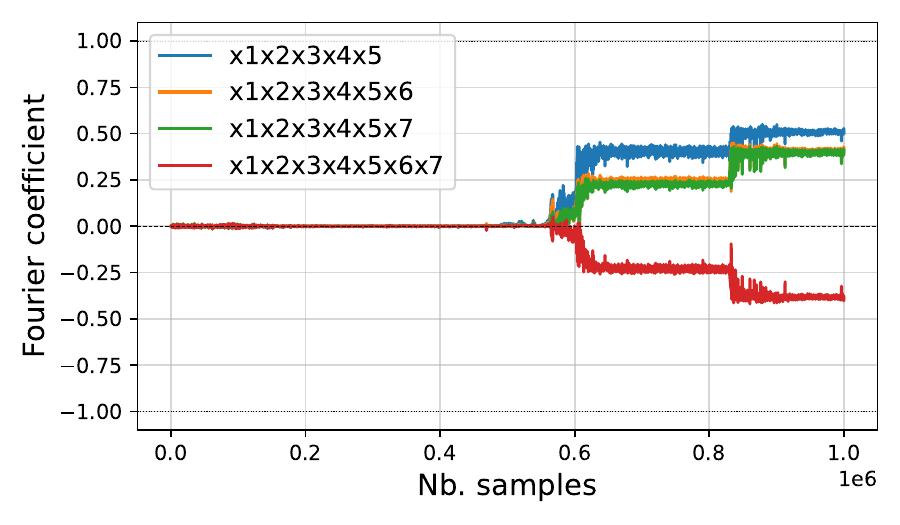}
        \caption{$f(x) = \frac 12 x_1x_2...x_5 (1+x_6+x_7 -x_6x_7) $ with \(d=50\). We train a 4-layer MLP with batch size \(1\), learning rate \(0.005\), TD parameter \(\alpha=0.9\), and random-walk flip probability \(p=0.9\). Left: test accuracy for random-walk data with TD loss, compared with i.i.d. data using square loss. Right: selected Fourier-Walsh coefficients of the learned predictor for random-walk data with TD loss.}
        \label{fig:Junta7}
    \end{figure}

Figure~\ref{fig:Junta7} repeats the same experiment for the \(7\)-junta
\(f(x)=\frac12 x_1x_2\cdots x_5(1+x_6+x_7-x_6x_7)\), with the same qualitative conclusion. In the right panel, we observe that the relevant Fourier-Walsh coefficients of the learned predictor emerge at roughly the same time. This contrasts with the behavior typically seen under i.i.d. uniform sampling, where lower-order terms are learned before higher-order ones~\cite{abbe2023sgd}.

     \begin{figure}[t]
        \centering
        \includegraphics[width=0.49\linewidth]{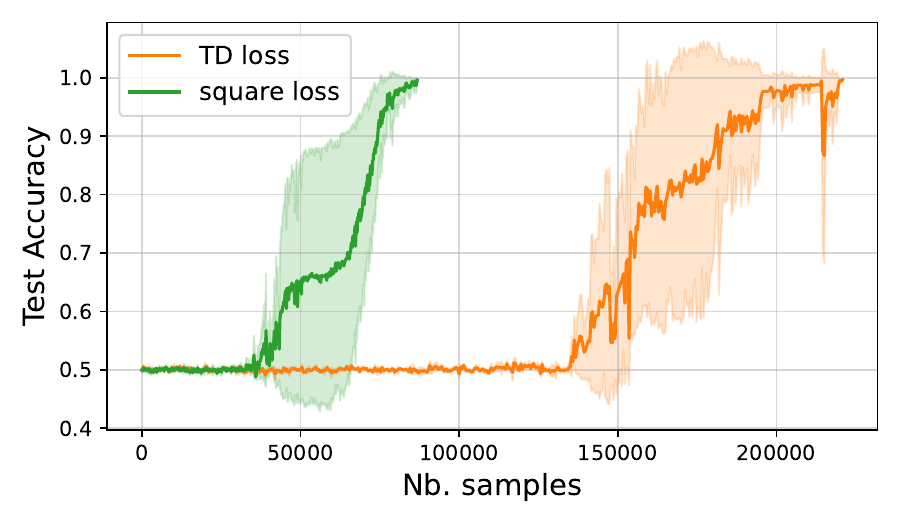}
        \includegraphics[width=0.49\linewidth]{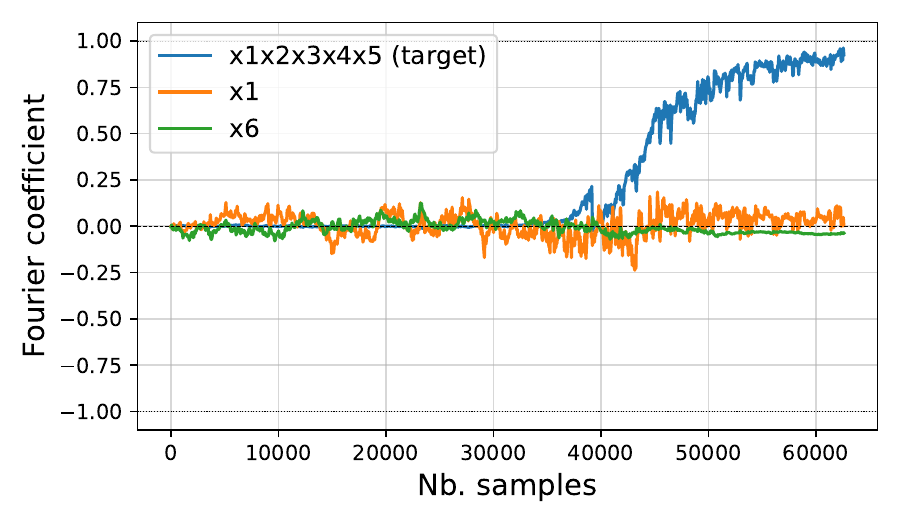}
        \caption{\(5\)-parity with \(d=50\). We train a 4-layer MLP with batch size \(1\), learning rate \(0.005\), and random-walk flip probability \(p=0.9\). Left: test accuracy for random-walk data using TD loss versus square loss. Right: selected Fourier-Walsh coefficients of the learned predictor for random-walk data with square loss.}
        \label{fig:parity5_RW_sqvstd}
    \end{figure}

In Figure~\ref{fig:parity5_RW_sqvstd}, we replace the temporal-difference loss with the standard square loss. Since the experiment uses batch size one, our large-batch lower bound (Theorem~\ref{thm:lower-bound}) does not apply. In this regime, SGD with the square loss successfully learns the parity from random-walk data. This suggests that the small-batch square-loss regime can behave qualitatively differently from the large-batch regime covered by our theory. Interestingly, in these experiments, the square loss alone appears sufficient to exploit the temporal structure, and can even outperform the temporal-difference loss. Understanding this phenomenon theoretically is an interesting direction for future work.



\section{Conclusion}

In this work, we studied how temporal correlations can fundamentally change the complexity of learning some sparse tasks, such as $k$-juntas. Our main positive result shows that, when the data are generated by a lazy random walk on the hypercube, a two-layer ReLU network trained by SGD with a temporal-difference loss can learn Boolean $k$-juntas with linear sample complexity. This shows that the temporal structure of the data can be exploited by SGD to bypass the hardness barriers known in the i.i.d.\ setting. 
On the negative side, we proved that this advantage does not extend to standard convex pointwise losses in the noisy large-batch regime.
Our work leaves several important directions open. First, our lower bound for pointwise losses does not cover the small-batch regime, and our numerical experiments suggest that standard small-batch SGD with square loss may still exploit temporal correlations. Understanding this phenomenon theoretically, and more broadly clarifying how learnability depends on the batch size in the presence of temporal correlations, is an interesting next step. 
Secondly, it would be interesting to extend our results to more general forms of correlated data, beyond single-bit-flip dynamics, and to identify which qualitative properties of temporal correlations influence the learning dynamics.



\section{Acknowledgments}
DM was partially supported by the Brian and Tiffinie Pang Faculty Fellowship. EM was partially supported by 
NSF DMS-2031883, Bush Faculty Fellowship ONR-N00014-20-1-2826 and Simons Investigator award. This work started while EC was visiting MIT for a collaboration visit supported by NSF DMS-2031883.

\bibliography{references}
\bibliographystyle{alpha}






\newpage
\appendix

\section{Proof of Theorem~\ref{thm:main_juntas_fd}}
	Without loss of generality we assume that the support $S=[k]$, i.e. that the target junta is supported on the first $k$ coordinates.
	\subsection{Initial gradient - results of Phase I} 
	The training of the first layer is performed via batched SGD with respect to the temporal difference loss. That is, let $B \in \mathbb{N}$ and let $\{x^{(t)}\}_{t=0}^B$ be the initial segment of the random walk. Define, $\mathcal{L}^{(t)}_\alpha (\theta^{(0)}):= \mathcal{L}_{\alpha}^{TD}\left(f,\NN(.;\theta^{(0)}), x^{(t)}, x^{(t-1)}\right)$. Then,
	\begin{align*}
		w^{(1)} & = w^{(0)} - \frac{\gamma_1}{B}\sum_{t=1}^B \nabla_{w} \mathcal{L}^{(t)}_\alpha  (\theta^{(0)})
        \\
		&= w^{(0)} + G(B),
	\end{align*}
    where we defined:
    \begin{align} \label{def:GB}
        G(B) = - \frac{\gamma_1}{B}  \sum_{t=1}^B \nabla_{w} \mathcal{L}^{(t)}_\alpha  (\theta^{(0)}).
    \end{align}
	Note that by our initialization, $G(B)_{ij}$ does not depend on the hidden unit $i\in [N]$. In the following, we drop the dependence on $i$, to simplify our notation. Note that:
	\begin{align*}
		\nabla (\Delta \NN(.;\theta^{(0)})^{(t)}) &=  a^{(0)} (\mathds{1}((w^{(0)})^T x^{(t)}+ b \geq 0) x^{(t)} -  \mathds{1}((w^{(0)})^T x^{(t-1)}+ b \geq 0) x^{(t-1)} ) \\
		& = \kappa (x^{(t)}- x^{(t-1)}).
	\end{align*}
	Further, since $w^{(0)}_i = 0$, 
	\begin{align*}\forall x  \in \{-1,1\}^d, \quad \NN(x;\theta^{(0)}) = \kappa \sum_{i=1}^N\ReLU(b_i) \implies \Delta \NN(\cdot;\theta^{(0)})^{(t)} = 0.
    \end{align*}
	Recall that to simplify matters, we chose $\alpha=1$ for the first part of training on the first layer, which will nullify the gradient in the irrelevant coordinates. Note that we could have just as well chosen $\alpha$ sufficiently close to $1$ at the cost of complicating the proof. In the case of $\alpha=1$ we then have:
	\begin{align*}
		\nabla_{w} \cL^{(t)}_{1}(\theta^{(0)}) &= -\kappa (x^{(t)}- x^{(t-1)}) \Big( \Delta f^{(t)} -\Delta \NN(.;\theta^{(0)})^{(t)}\Big) \\
		& = - \kappa (x^{(t)}-x^{(t-1)})\Delta f^{(t)} 
	\end{align*}
	Thus, 
	\begin{align} \label{def:GB2}
		G(B)  =  \frac{\kappa \gamma_1}{B} \sum_{t=1}^B (x^{(t)}-x^{(t-1)}) \Delta f^{(t)}.
	\end{align}
    Because $x^{(t)}$ evolves via the lazy hypercube walk (Def.~\ref{def:lazy-rw}), the increment is $x^{(t+1)}-x^{(t)} = -2 Z_{t} x^{(t)}_{j_t}\cdot e_{j_t} $. Note that if $j_t \not\in [k]$, then $f$ does not depend on that coordinate, thus $\Delta f^{(t)} =0$. Therefore, the weights update only when the walk flips a relevant coordinate, and in that case, the update affects only coordinates in the support of the $k$-junta:
	\begin{align} \label{eq:GT-coordinate}
		w_j^{(1)} = (G(B))_{j}=
		\begin{cases}
			\displaystyle
			\frac{\kappa \gamma_1}{B} 
			\sum_{t=1}^B
			\Delta f^{(t)} \big(x^{(t)}_j - x^{(t-1)}_j\big) ,
			& \text{  if } j \in [k], \\[10pt]
			0,
			& \text{  if } j \notin [k].
		\end{cases}
	\end{align}

	\noindent

    \subsubsection{Non-degeneracy of the initial gradient}
    With these choices, it should be rather clear that as long as the batch size $B$ is sufficiently large, $w^{(1)}$ will precisely recover the support of $f$. Thus, let $x \to E(x):=\{\sigma((w^{(1)})^Tx +b_i)\}_{i=1}^N$ be the embedding from $\bR^d$ to $\mathbb{R}^N$ obtained by the first layer of the network. The above observation on the gradient can be rephrased as saying that $E$ induced a map from functions $\{-1,1\}^d$ to functions on $\{-1,1\}^k.$
	Thus, to fully recover $f$ it should be enough to optimize over the second layer of $\NN(\cdot; \theta^{(1)})$, which reduces to a linear regression problem. Concretely, we shall need to find some $a \in \mathbb{R}^N$ such that for $\theta = (w^{(1)},a)$ we have 
    \begin{equation} \label{eq:linearreg}
        \forall x \in \{-1,1\}^d \qquad \NN(x,\theta) = a^TE(x) = f(x).
    \end{equation}
	To make this linear problem well-posed though, we require a non-degeneracy condition on $\NN(\cdot; \theta^{(1)})$. Suppose we have two points $s,r \in \{-1,1\}^k$ such that $\tilde{f}(s)\neq \tilde{f}(r)$ yet $(w^{(1)})^Ts = (w^{(1)})^Tr$. Then, for any choice of $a \in \mathbb{R}^N$ we will have $a^TE(s) = a^TE(r)$, making the above program unfeasible. Our main technical condition for the first stage of the algorithm is hence the following condition on the initial gradient. Recall below that $\tau = \min_{j\in[k]}\mathrm{Inf}_j(f)$ is the minimal influence on the support.
	
		\begin{proposition}[Non-degeneracy of the initial gradient]
		\label{prop:non-degeneracy}
		Assume the data is generated by the lazy random walk of Definition~\ref{def:lazy-rw} with $p \in [0,1/2]$ and that $f:\{\pm1\}^d\to \bR$ is a $k$-junta supported on $[k]$. For $B\ge 1$, let $G(B)=(G_j(B))_{j\in[d]}$ be the gradient defined in~\eqref{eq:GT-coordinate}. Let $\eta = \frac{\kappa \gamma_1}{B}$, so that
		\begin{align*}
		    G(B) = \eta \sum_{t=1}^B \Delta f^{(t)} (x^{(t)}-x^{(t-1)}).
		\end{align*}
		Then there exists a universal constant $C>0$ such that, as long as,
        $$B \geq B_0(\varepsilon) := \frac{C}{\varepsilon^2}\frac{dk^{16}\|f\|_{\infty}^6}{p^7\tau^3} \quad \text{ and } \quad \eta \geq \sqrt{\frac{2d}{Bk}},$$
        for every pair $s,r\in\{\pm 1\}^k$ with $\tilde f(s)\neq \tilde f(r)$, we have
		\begin{equation}
			\mathbb{P}\Bigl(
			\Bigl|\sum_{j=1}^k G_j(B)\,\bigl(s_j - r_j\bigr)\Bigr|
			> \varepsilon
			\Bigr)
			\;\ge\;
			1 -\varepsilon\sqrt{\frac{Ck}{p\tau}},
			\label{eq:non-degeneracy}
		\end{equation}
		where the probability is over the randomness of the random walk.
		\end{proposition}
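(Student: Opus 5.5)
We fix a pair $s,r\in\{\pm1\}^k$ with $\tilde f(s)\neq\tilde f(r)$, set $v:=s-r\in\{0,\pm2\}^k$ (nonzero), and study the scalar additive functional
\[
X_B:=\sum_{j=1}^k G_j(B)v_j=\eta\sum_{t=1}^B \Delta f^{(t)}\,\bigl\langle x^{(t)}-x^{(t-1)},v\bigr\rangle .
\]
Each summand is a function $\phi(x^{(t-1)},x^{(t)})$ of one step of the lazy walk projected onto the first $k$ coordinates, so $X_B/\eta$ is an additive functional of a finite, ergodic Markov chain on pairs of states of $\{\pm1\}^k$ (holding at $\{\pm1\}^k$ with the conditional step distribution). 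The target is an anti-concentration bound $\mathbb P(|X_B|\le\varepsilon)\le \varepsilon\sqrt{Ck/(p\tau)}$. The first step is to compute the mean and asymptotic variance. When a step flips coordinate $j\in[k]$ from $x$, we get $\Delta f=\tilde f(x^{j})-\tilde f(x)=-2x_j\,\partial_j\tilde f(x)$ and $\langle\Delta x,v\rangle=-2x_jv_j$, so the summand equals $4v_j\partial_j\tilde f(x)$. Under stationarity its mean is $\frac{4p}{d}\sum_j v_j\E[\partial_j f]=\frac{4p}{d}\sum_j v_j\hat f_{\{j\}}$, which may vanish (parities). We therefore do not rely on the mean. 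Instead we lower bound the second moment: $\E[\phi^2]=\frac{16p}{d}\sum_j v_j^2\,\mathrm{Inf}_j(f)\ge\frac{64p\tau}{d}$, since $v\neq0$.

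The second step is to show the asymptotic variance $\sigma^2$ of $\sum_t\phi_t$ is at least of order $Bp\tau/(dk)$ or so. Consecutive summands are correlated only through the state, and $\phi_t$ is nonzero only on the rare event that a support coordinate flips (probability $pk/d$). We write $\sigma^2=B\bigl(\E\phi^2+2\sum_{m\ge1}\Cov(\phi_0,\phi_m)\bigr)$. We then bound the covariance series by $O(\E[|\phi|]^2\cdot\text{mixing time})$, or more cleanly via a Poisson-equation/martingale decomposition for the chain on $\{\pm1\}^k$, whose spectral gap is of order $p/d$. Each covariance term carries an extra factor $pk/d$ from requiring a second flip inside the support. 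The cross terms then contribute $O(\|f\|_\infty^2 k^2 p/d)$ at most, times constants, compared with a diagonal of order $p\tau/d$. This step is delicate: the cross terms are not obviously small relative to $\tau$. I would instead lower bound $\sigma^2$ by the martingale part directly, using the Markov property: the variance of the martingale increments is at least the conditional variance of $\phi_t$ given $x^{(t-1)}$, which is still $\gtrsim p\tau/(dk)$. This gives $\mathrm{Var}(X_B)\gtrsim \eta^2Bp\tau/(dk)\ge 2p\tau/k^2$ by the assumption $\eta^2\ge 2d/(Bk)$.

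The third and main step is a quantitative CLT, the one the paper attributes to \cite{kloeckner2019effective}, for additive functionals of this finite chain. It gives a Berry--Esseen-type bound $\sup_u|\mathbb P(X_B\le u)-\Phi_{\mu,\sigma^2}(u)|\le \frac{C\,\mathrm{poly}(k,\|f\|_\infty)}{\sqrt{B}\,\mathrm{poly}(p,\tau)}\sqrt d$, with the $d$ and $p$ dependence coming from the spectral gap $p/d$ and from $\|\phi\|_\infty\le 8\|f\|_\infty$. Tracking these constants yields the exponents $k^{16}\|f\|_\infty^6/(p^7\tau^3)$ in $B_0$. Given this, we finish as follows: $\mathbb P(|X_B|\le\varepsilon)\le \frac{2\varepsilon}{\sqrt{2\pi}\,\sigma_X}+2\,\mathrm{err}_B$. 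The Gaussian term is $\le\varepsilon\sqrt{Ck/(p\tau)}$ by the variance bound, and $B\ge B_0(\varepsilon)$ makes the error term $\le\varepsilon$ as well, absorbed into the constant. I expect the main obstacle to be the effective CLT itself, namely verifying the hypotheses of Kloeckner's theorem with explicit constants depending on the spectral gap and the degenerate (mostly zero) increments, together with the lower bound on the asymptotic variance.
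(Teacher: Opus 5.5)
Your overall plan (fix $v=s-r$, lower-bound the dynamical variance, apply Kloeckner's effective CLT, finish with the bounded Gaussian density) is the paper's. However, two steps do not go through as written.

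\emph{The variance lower bound.} You claim that the variance of the martingale increments dominates $\E[\Var(\phi_t\mid x^{(t-1)})]$ ``by the Markov property''. This is false for general edge functionals. For a coboundary $\phi(x,y)=g(y)-g(x)$ the sum telescopes, so the dynamical variance is $0$ while $\Var(\phi_t\mid x^{(t-1)})>0$. In the Poisson decomposition the increment is $\phi_t+u(x^{(t)})$ minus its conditional mean, and the corrector $u$ can cancel $\phi_t$.

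The inequality you want does hold here, but because of a structural fact you never use: $\varphi_v(x,y)=\varphi_v(y,x)$. Together with reversibility, this gives $\E[\varphi_v(X_0,X_1)\mid X_1]=h(X_1)$, where $h(x)=\E[\varphi_v(x,X_1)\mid X_0=x]$. Hence for $m\ge1$, $\mathrm{Cov}(\varphi_v(X_0,X_1),\varphi_v(X_m,X_{m+1}))=\langle h_0,P^{m-1}h_0\rangle_\pi$ with $h_0=h-\E h$. This is nonnegative by positive semidefiniteness of the lazy kernel for $p\le 1/2$, as in the paper; alternatively, $\sum_{m\ge0}\langle h_0,P^m h_0\rangle_\pi\ge0$ for any reversible $P$. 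It follows that $\sigma_v^2\ge\Var(\varphi_v)$.

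Separately, your per-step bound $p\tau/(dk)$ only gives a Gaussian term of order $\varepsilon k/\sqrt{p\tau}$. That is a factor $\sqrt k$ worse than the claimed $\varepsilon\sqrt{Ck/(p\tau)}$ with universal $C$. You need the bound without the $1/k$; the paper has $\Var(\varphi_v)\ge 32p\tau/k$ per support step.

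\emph{The dependence on $d$.} You apply the CLT to the projected chain that still holds with probability $1-pk/d$. For that chain:
\begin{itemize}
\item the $\|\cdot\|_{\infty,L}$-contraction gap is only of order $p/(kd)$;
\item the per-step dynamical variance is of order $p/d$, so the normalized observable has $\|\tilde\varphi\|_{\infty,L}$ growing like $\sqrt d$.
\end{itemize}
Kloeckner's bound $\frac{1}{\sqrt B(1-\rho)^2}\max(\|\tilde\varphi\|_{\infty,L},\|\tilde\varphi\|_{\infty,L}^3)$ is then of order $d^{7/2}/\sqrt B$, which forces $B\gtrsim d^7$. The $\sqrt{d/B}$ scaling you assert does not come out of tracking constants along your route. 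This is exactly the ``mostly zero increments'' obstacle you flag but leave unresolved.

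The missing idea is the paper's random time change. Off-support steps contribute $0$ and do not move the projected state, so in law $\overline Y_B=\eta\sqrt{B_k}\,\overline Y'_{B_k}$. Here $\overline Y'_T$ is the normalized additive functional of the $k$-dimensional $p$-lazy edge walk, with contraction $1-2p/k^2$, variance at least $32p\tau/k$, and no $d$ anywhere. The count of support proposals $B_k\sim\mathrm{Bin}(B,k/d)$ is independent of that walk.

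A Chernoff bound puts $B_k\in[kB/(2d),3kB/(2d)]$. This makes the CLT error of order $k^8\|f\|_\infty^3\sqrt d/(p^{7/2}\tau^{3/2}\sqrt B)$. Combined with $\eta\ge\sqrt{2d/(Bk)}$, it also gives $\eta\sqrt{B_k}\ge1$. This is precisely what produces $B_0\propto d\,k^{16}\|f\|_\infty^6/(p^7\tau^3\varepsilon^2)$.
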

	
		Our proof of Proposition \ref{prop:non-degeneracy} goes through a quantitative version of the central limit theorem for Markov chains, introduced in \cite{kloeckner2019effective}. To apply this result, we consider the set 
        $$\cV := \{\, v=s - r : s,r \in \{\pm 1\}^k,\ \tilde f(s)\neq \tilde f(r)\,\}
		\subseteq \{0,\pm 2\}^k,$$ and fix $s,r\in\{\pm 1\}^k$ such that $v = s-r \in \cV$.
		Define the scalar projection of the gradient:
		\begin{align}
			\Pi_B(v) := \sum_{j=1}^k G_j(B) v_j.
		\end{align}
		By~\eqref{eq:GT-coordinate},
		\[
		\Pi_B(v) 
		= \eta
		\sum_{u=1}^B \Delta f^{(u)} \sum_{j=1}^k (x^{(u)}_j - x^{(u-1)}_j)v_j
		\]
		Further define the process
		\[
		Y_u :=
		\Delta f^{(u)} \sum_{j=1}^k (x^{(u)}_j - x^{(u-1)}_j)v_j,
		\qquad
		\overline Y_B := \eta\sum_{u=1}^B Y_u,
		\]
		for which
		\begin{equation}
			\Pi_B(v) 
			= \overline Y_B.
			\label{eq:Z-vs-Y}
		\end{equation}
		We rewrite $\overline Y_B$ as an additive functional of simple Markov chain as follows.
		\noindent
		Consider the Markov chain
		\[
		Z_u := (x^{(u-1)},x^{(u)}) \in \Omega := \{\pm 1\}^d \times \{\pm 1\}^d.
		\]
		Since $(x^{(u)})_{u\ge 0}$ is a homogeneous lazy random walk with stationary law $\Unif\{\pm 1\}^d$ and we start from $x^{(0)}\sim\Unif\{\pm 1\}^d$, the chain $(Z_u)$ is an \emph{edge random walk}, stationary with respect to some measure $\mu_0$ on $\Omega$.
		Define the observable
		\[
		\varphi_{v}(x,y)
		:= \bigl(f(y)-f(x)\bigr)\cdot \sum_{j=1}^k (y_j-x_j) v_j,
		\qquad (x,y)\in\Omega.
		\]
		Then $Y_u = \varphi_{v}(Z_u)$ and $\overline Y_B = \eta\sum_{u=1}^B \varphi_{v}(Z_u)$. To avoid having to depend on the ambient dimension $d$, let us note that 
        $$\mathrm{support}(x^{(u)}-x^{(u-1)}) \cap [k] = \emptyset \implies \varphi_v(Z_u)=0.$$ 
        We can thus couple $\overline{Y}_B$ with an additive function of a $k$-dimensional random walk as follows:
        Let $Z'_u$ be an edge random walk on $\{\pm 1\}^k \times \{\pm 1\}^k,$ and set $Y'_u = \varphi_v(Z'_u)$. By the above observation $Y'_u = Y_u$ conditioned on $\mathrm{support}(x^{(u)}-x^{(u-1)}) \cap [k] \neq \emptyset$ while also accounting for lazy steps in this direction, and so $Y'_u$ is well-defined.
        For $B \in \mathbb{N}$ we are thus led to consider the normalized lower-dimensional function
        $$\overline{Y}'_{B} = \frac{1}{\sqrt{B}}\sum\limits_{u=1}^{B}Y'_u.$$
        Further, recall from Definition \ref{def:lazy-rw} that $j_u$ is the proposed flip of the random walk at step $u$, and define the random variable
        $$B_k := \#\{0\leq u \leq B \mid  j_u \in [k]\}.$$
        That is, the number of steps in which $Z_u$ evolved along the first $k$ coordinates, where we also include the lazy steps that did not materialize to a flip. With respect to these counts, we now have the following equality in law
        $$\overline{Y}_B = \eta\sqrt{B_k}\overline{Y}'_{B_k}.$$
		Let
		\[
		m_{v} := \mathbb{E}[\varphi_v(Z'_1)],
		\qquad
		\sigma_{v}^2 := \sigma^2(\varphi_{v}),
		\]
		where $\sigma^2(\cdot)$ is the dynamical variance, given by
		$$\sigma_v^2 = \lim_{T\to \infty}\mathrm{Var}(\overline Y'_{T}).$$
		The proof of Proposition \ref{prop:non-degeneracy} crucially relies on the following fact, that for $B$ large enough (and hence also for $B_k$ large enough) $\overline Y'_{B_k}$ is essentially Gaussian.
		\begin{lemma} \label{lem:BEedges}
				Let $G$ be a standard Gaussian variable. There exists a universal constant $C>0$ such that for any $v \in \cV$ and $x \in \mathbb{R}$,
					$$\sup\limits_{x\in \mathbb{R}}\left|\mathbb{P}\left(\sigma_v^{-1}\left(\overline Y'_T - \E[\overline Y'_T]\right) \leq x\right) - \mathbb{P}\left(G \leq x\right)\right| \leq C\frac{k^7\|f\|_{\infty}^3}{p^2\min(\sigma_v^{3},\sigma_v)\sqrt{T}}.$$
		\end{lemma}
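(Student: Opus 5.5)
The plan is to apply the Berry--Esseen theorem for Markov chains of \cite{kloeckner2019effective} to the edge random walk $Z'_u$ on $\Omega_k:=\{\pm1\}^k\times\{\pm1\}^k$ with observable $\varphi_v$. That result gives a bound of the form
\[
\sup_x\bigl|\mathbb{P}(\sigma_v^{-1}(\overline Y'_T-\E\overline Y'_T)\le x)-\mathbb{P}(G\le x)\bigr|\le C\,\frac{\mathcal{K}(\varphi_v,P)}{\sqrt T},
\]
where $\mathcal{K}$ is an explicit expression in three quantities: the spectral gap (or a contraction constant) of the transfer operator $P$ on a suitable Banach space of observables, the norm of $\varphi_v$ in that space, and $\sigma_v$ (appearing as $\sigma_v^{-3}$ for the third-moment term and $\sigma_v^{-1}$ for the remaining terms, which is why $\min(\sigma_v^3,\sigma_v)$ appears). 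The proof therefore reduces to bounding each ingredient polynomially in $k$, $1/p$ and $\|f\|_\infty$.

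\textbf{Step 1 (observable size).} Since $|f(y)-f(x)|\le 2\|f\|_\infty$ and, along an edge, $y-x$ has at most one nonzero coordinate of size $2$ with $|v_j|\le 2$, we get $\|\varphi_v\|_\infty\le 8\|f\|_\infty$. Any Lipschitz or oscillation-type norm used by Kloeckner is then also $O(\|f\|_\infty)$. This accounts for the factor $\|f\|_\infty^3$ after raising to the third moment.

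\textbf{Step 2 (mixing of the edge chain).} The lazy walk restricted to $[k]$, where each step proposes a coordinate in $[k]$ uniformly and flips it with probability $p$, has spectral gap $\Theta(p/k)$ on $L^2(\Unif\{\pm1\}^k)$, since its eigenvalues are $1-2p|A|/k$. With $p\le 1/2$ all eigenvalues are nonnegative, so there is no periodicity. The edge chain $Z'_u=(x'_{u-1},x'_u)$ is a one-step lift: $P^2$ applied to a function of $(x,y)$ factors through the base chain after one step. I would show that $\|P^n-\Pi\|\le C(1-2p/k)^{n-1}$ in $L^\infty$ or a Lipschitz norm, converting $L^2$ to $L^\infty$ at the cost of a factor of at most $2^{k/2}$. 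That factor may be too large, so I would instead use a total-variation/coupling bound. Coordinate-wise coupling gives mixing time $O((k\log k)/p)$ and a contraction rate $1-cp/k$ directly in the Lipschitz (Hamming) seminorm, with no exponential loss.

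\textbf{Step 3 (assembling).} Substitute into Kloeckner's explicit constant, which is a polynomial in $(1-\text{contraction})^{-1}=O(k/p)$ and $\|\varphi_v\|$. Tracking the exponents of $k/p$ that appear, with some room to spare, gives $k^7/p^2$ (the $p$-dependence is partly absorbed because lazy steps leave $\varphi_v=0$). This yields the stated bound, uniformly in $v\in\cV$ because $\|\varphi_v\|$ is uniform in $v$.

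The main obstacle is Step 2 together with verifying Kloeckner's hypotheses: choosing a Banach space on which $P$ has a spectral gap whose constants are polynomial in $k$ rather than exponential. This also requires checking that his bound, which for a nondegenerate variance is stated in terms of $\sigma_v$, allows $\sigma_v$ to enter only through the $\min(\sigma_v^3,\sigma_v)$ denominator. I would first try the Lipschitz space with respect to Hamming distance on $\Omega_k$ under the coupling contraction. If the exponents do not close, I would fall back to $L^\infty$ with total-variation mixing time $t_{\mathrm{mix}}=O(k\log k/p)$, absorbing the logarithm into the polynomial in $k$.
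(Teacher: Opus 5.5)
Your plan matches the paper's proof, which applies \cite[Theorem C]{kloeckner2019effective} to the edge chain with the norm $\|\cdot\|_{\infty,L}=\|\cdot\|_\infty+\mathrm{Lip}$ (Hamming metric), bounds $\|(\varphi_v-\E\varphi_v)/\sigma_v\|_{\infty,L}\le 4M/\sigma_v$ to produce the $\max(\|\cdot\|,\|\cdot\|^3)\lesssim M^3/\min(\sigma_v,\sigma_v^3)$ factor, and uses the contraction coefficient $\rho=1-2p/k^2$ quoted from \cite{kloeckner2021toy}. One correction: that theorem needs one-step contraction in the full norm $\|\cdot\|_{\infty,L}$ rather than the Lipschitz seminorm, and there the rate is only $1-\Theta(p/k^2)$, not $1-cp/k$, so $(1-\rho)^{-2}=O(k^4/p^2)$ and no ``absorption'' of $p$ is needed; this still fits under $k^7$, especially since your sharper bound $\|\varphi_v\|_\infty\le 8\|f\|_\infty$ (the paper uses $8k\|f\|_\infty$) leaves a $k^3$ margin.
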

		We will prove Lemma \ref{lem:BEedges} in Section \ref{sec:ACmarkov} below. For now, to properly apply the Lemma, we will need to estimate the first moments of $\overline Y'_T$.
	
		\begin{lemma} \label{lem:Yexpec}
		Let $Y'_u$ be as above. Then,
		$$\mathbb{E}[Y'_u] =  \frac{4p}{k}\sum\limits_{j=1}^k v_j\hat{f}_j,$$
		where $\hat{f}_j$ are the linear Fourier coefficients of $f$. In particular $\E[Y'_u] = O(p)$.
	\end{lemma}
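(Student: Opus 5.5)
The plan is to compute $\E[Y'_u]$ directly under the stationary law of the $k$-dimensional edge walk $Z'_u=(x,y)$. Under stationarity, $x\sim\Unif\{\pm1\}^k$. Conditioned on the walk proposing a coordinate in $[k]$, the proposed coordinate $j$ is uniform on $[k]$. With probability $1-p$ the step is lazy, so $y=x$ and $\varphi_v(x,y)=0$; these steps contribute nothing. With probability $p$ we get $y=x^{(j)}$, the point $x$ with coordinate $j$ flipped. In that case $y_i-x_i=-2x_j\mathds{1}\{i=j\}$, so
\[
\varphi_v(x,x^{(j)})=\bigl(f(x^{(j)})-f(x)\bigr)\cdot(-2x_j v_j).
\]

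The key step is to write $f(x)-f(x^{(j)})$ in the Fourier--Walsh basis. Flipping $x_j$ negates exactly the monomials containing $j$, so $f(x)-f(x^{(j)})=2\sum_{A\ni j}\hat f_A x^A = 2x_j\,\partial_j f(x)$, where $\partial_j f=\sum_{A\ni j}\hat f_A x^{A\setminus\{j\}}$. Substituting and using $x_j^2=1$ gives $\varphi_v(x,x^{(j)}) = 4 v_j\,\partial_j f(x)$. Taking expectation over uniform $x$ kills every nonconstant monomial, leaving $\E_x[\partial_j f(x)]=\hat f_{\{j\}}=\hat f_j$.

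Averaging over the laziness and over $j$ uniform in $[k]$ then yields
\[
\E[Y'_u]=p\cdot\frac1k\sum_{j=1}^k 4v_j\hat f_j=\frac{4p}{k}\sum_{j=1}^k v_j\hat f_j .
\]

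For the bound $O(p)$, note that $|v_j|\le 2$ and $|\hat f_j|\le\|f\|_\infty$. Hence $\bigl|\tfrac1k\sum_j v_j\hat f_j\bigr|\le 2\|f\|_\infty$, so $|\E[Y'_u]|\le 8p\|f\|_\infty=O(p)$, with constants depending only on $\|f\|_\infty$ as the paper allows.

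The only delicate point is the convention for $Y'_u$. We need that the reduced chain proposes a uniform coordinate of $[k]$ at every step, flipping it with probability $p$, and that it is started from stationarity so that $x^{(u-1)}$ is uniform for every $u$. Both follow from the coupling described before the lemma, since lazy steps in $[k]$ are explicitly retained. The remainder is the routine Fourier computation above.
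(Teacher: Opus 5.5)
Your proposal is correct and takes essentially the same route as the paper: compute $\E[\varphi_v(Z')]$ under the stationary edge measure, where each coordinate $j\in[k]$ is flipped with probability $p/k$, and reduce to the identity $\E\bigl[(f(x)-f(x^{(j)}))x_j\bigr]=2\hat f_j$. You obtain that identity by expanding $f(x)-f(x^{(j)})=2x_j\partial_j f(x)$ in the Walsh basis, whereas the paper phrases it as a discrete integration by parts, which is the same computation; your explicit bound $|\E[Y'_u]|\le 8p\|f\|_\infty$ also justifies the $O(p)$ claim, which the paper leaves implicit.
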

	\begin{proof}
		Let $Z' = (u,v)$ be drawn according to invariant measure, that is $(u,v)$ is a random edge of $\{\pm 1\}^k \times \{\pm 1\}^k$.
		We have,
		$$\mathbb{E}\left[\varphi_v(Z')\right] = \frac{2p}{k}\sum\limits_{j=1}^k\mathbb{E}\left[(f(u) - f(u^{(j)}))u_j\right]v_j =  \frac{4p}{k}\sum\limits_{j=1}^k\mathbb{E}\left[D_j(v_jf(u))u_j\right],$$
		where $u$ is uniform and $u^{(j)}$ is obtained from $u$ by flipping the $j$'th coordinate.
		Integrating by parts on the discrete cube, we see
		$$\sum\limits_{j=1}^n\mathbb{E}\left[D_j(v_jf(u))u_j\right] =\sum\limits_{j=1}^d\mathbb{E}\left[v_jf(u)u_j\right]= \sum\limits_{j=1}^d v_j\hat{f}_j,$$
	\end{proof}
	\begin{lemma}\label{lem:Yvar}
		Suppose that $p\leq \frac{1}{2}$. Then, for any $v$, it holds that
        $$\mathrm{Var}(\varphi_v(Z'_1)) \leq \sigma^2_v \leq 2\frac{k}{p}\mathrm{Var}(\varphi_v(Z'_1)).$$
	\end{lemma}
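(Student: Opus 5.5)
We plan to bound the dynamical variance through the Green–Kubo expansion for the stationary chain $Z'_u$:
\[
\sigma_v^2 = \mathrm{Var}(\varphi_v(Z'_1)) + 2\sum_{m\ge 1}\mathrm{Cov}\bigl(\varphi_v(Z'_1),\varphi_v(Z'_{1+m})\bigr).
\]
This follows by expanding $\mathrm{Var}(\overline Y'_T)$ and using stationarity. The series converges absolutely because the lazy walk on $\{\pm1\}^k$ is geometrically ergodic. The natural tool is the Fourier–Walsh basis. Write $g=\varphi_v-m_v$. The chain $Z'_u=(x'^{(u-1)},x'^{(u)})$ is a function of consecutive states. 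Hence conditioning on $x'^{(u)}$ reduces the covariance at lag $m\ge1$ to one involving the transition operator $P$ of the lazy walk:
\[
\mathrm{Cov}(g(Z'_1),g(Z'_{1+m})) = \E\bigl[h_1(x'^{(1)})\,(P^{m-1}h_0)(x'^{(1)})\bigr],
\]
where $h_1(y)=\E[g(Z'_1)\mid x'^{(1)}=y]$ and $h_0(x)=\E[g(Z'_2)\mid x'^{(1)}=x]$. By reversibility of the walk and the symmetry $\varphi_v(x,y)=\varphi_v(y,x)$ (both factors change sign), we get $h_0=h_1=:h$. The sum therefore becomes $\sum_{m\ge1}\langle h,P^{m-1}h\rangle$, with $h$ mean zero under the uniform measure.

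\textbf{Lower bound.} The eigenvalues of $P$ on the character $x^A$ are $1-2p|A|/k\in[0,1)$ for $A\neq\emptyset$, and they are nonnegative precisely because $p\le 1/2$. Thus each $\langle h,P^{m-1}h\rangle=\sum_{A\neq\emptyset}\hat h_A^2(1-2p|A|/k)^{m-1}\ge0$. So every covariance term is nonnegative, and $\sigma_v^2\ge\mathrm{Var}(\varphi_v(Z'_1))$. This is where the hypothesis $p\le1/2$ enters.

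\textbf{Upper bound.} Summing the geometric series on each mode gives
\[
\sum_{m\ge1}\langle h,P^{m-1}h\rangle=\sum_{A\ne\emptyset}\hat h_A^2\,\frac{k}{2p|A|}\le \frac{k}{2p}\|h\|_2^2.
\]
Since $h$ is a conditional expectation of $g$, Jensen gives $\|h\|_2^2\le \E[g^2]=\mathrm{Var}(\varphi_v(Z'_1))$. Hence $\sigma_v^2\le (1+\frac{k}{p})\mathrm{Var}(\varphi_v(Z'_1))\le 2\frac{k}{p}\mathrm{Var}(\varphi_v(Z'_1))$, using $k/p\ge 1$.

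\textbf{Main obstacle.} The delicate step is the reduction of the edge-chain covariances to the vertex operator $P$. One must check that conditioning on the shared vertex yields the same function $h$ from both sides. This is what makes each term a positive-semidefinite quadratic form. If the symmetry argument fails, I would instead bound the cross terms by Cauchy–Schwarz with the spectral gap $2p/k$. That still yields the upper bound, though the lower bound would then need positivity of $P$ directly via $P=\frac12(I+Q)$ for some Markov operator $Q$ when $p\le1/2$.
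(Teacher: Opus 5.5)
Your proof is correct. The lower bound is the paper's argument. You condition on the shared vertex and use $\varphi_v(x,y)=\varphi_v(y,x)$ to get the same function $h$ on both sides. You then invoke positive semidefiniteness of the lazy-walk semigroup for $p\le\frac12$. You also make explicit the reversibility step behind $h_0=h_1$, which the paper leaves implicit.

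The upper bound is where you diverge. The paper bounds each covariance by Cauchy--Schwarz against a claimed spectral gap $\tau\ge p/k$ of the edge chain $Z'$ itself, and sums $(1-\tau)^u$. You instead reuse the vertex reduction, diagonalize $P$ in the Fourier--Walsh basis, and sum the geometric series mode by mode. You then close with Jensen, $\|h\|_2^2\le\mathrm{Var}(\varphi_v(Z'_1))$.

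Your route buys three things:
\begin{enumerate}
\item It uses only the explicit spectrum of the reversible vertex walk, so you never need an $L^2$ contraction for the edge chain. That chain is not reversible, and its one-step operator is an isometry on mean-zero functions of the first coordinate. The paper's estimate $|\mathrm{Cov}|\le(1-\tau)^u\,\mathrm{Var}$ therefore implicitly relies on the same reduction you perform, at the cost of one step of decay.
\item It gives the sharper bound $\sigma_v^2\le\mathrm{Var}(\varphi_v(Z'_1))+\frac{k}{p}\|h\|_2^2\le(1+\frac{k}{p})\,\mathrm{Var}(\varphi_v(Z'_1))$.
\item A single computation handles both the lower and the upper bound, and also yields the absolute summability needed for the Green--Kubo expansion.
\end{enumerate}
The paper's version is shorter and avoids any diagonalization, but it rests on the unproved edge-chain gap claim.
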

	\begin{proof}
		We have that 
		\begin{align*}
			\sigma^2_v = \lim\limits_{T\to \infty}\frac{1}{T}\mathrm{Var}\left(\sum\limits_{u=1}^T\varphi_v(Z'_u)\right) &= \mathrm{Var}(\varphi(Z'_1)) + 2\sum\limits_{u=1}^{\infty}\mathrm{Cov}(\varphi_v(Z'_1),\varphi_v(Z'_u))\\
			&= \mathrm{Var}(\varphi(Z'_1))  + 2\sum\limits_{u=1}^{\infty}\mathrm{Cov}(\varphi_v(Z'_1),P_u\varphi_v(Z'_1)).
		\end{align*}
        We shall start with the lower bound, which is the more difficult part.
        We claim that for any $u \geq 1$, $\mathrm{Cov}(\varphi_v(Z'_1)\varphi_v(Z'_u)) \geq 0$, the bound will immediately follow.
        Indeed, for $z'$ and edge write $z' = (x,x')$, and similarly for the random walk $Z_u = (X_{u-1}, X_u)$. We first note that
        $$h_v(x,x') := P_1\varphi_v(z) = \sum\limits_{x''}\mathbb{P}\left(X_2 = x''| X_1 = x'\right) \varphi_v(x', x'').$$
        In particular, $h_v$ is only a function of $x'$ and is independent from $x$, and so we will instead write $h_v(x')$. Moreover, it is immediate to verify that $\varphi_v(x,x') = \varphi_v(x',x)$, and so 
        $$\mathbb{E}\left[\varphi_v(Z'_1)|X_1\right] = \mathbb{E}\left[\varphi_v(X_0,X_1)|X_1\right] = \mathbb{E}\left[\varphi_v(X_0,X_1)|X_0\right],$$
        i.e. conditioning on the past and on the present has the same effect.
        Now, for $u\geq 0$, since $X_u$ is Markovian
        $$\mathrm{Cov}(\varphi_v(Z'_1),\varphi_v(Z_u')) = \mathrm{Cov}(\E\left[\varphi_v(X_0,X_1)|X_1\right],\E\left[\varphi_v(X_{u-1},X_u)|X_1\right]). $$
Because of the symmetry, as above, we have $\E\left[\varphi_v(X_0,X_1)|X_1\right] = h(X_1)$. Furthermore, by definition,
$$\E\left[\varphi_v(X_{u-1},X_u)|X_1\right] = P_u(X_0,X_1) = P_{u-1}h(X_1).$$
Combining, we get
$$\mathrm{Cov}(\varphi_v(Z'_1),\varphi_v(Z_u')) = \mathrm{Cov}(h(X_1),P_{u-1}h(X_1)) \geq 0.$$
In the inequality, we have used that on functions that only depend on $X_1$, $P_{u-1}$ identifies with the usual semigroup of the lazy random walk on $\{\pm1\}^k$. Since we take $p \leq \frac{1}{2}$, it is positive semi-definite, which leads to the inequality.
        
        For the upper bound, let $\tau$ be the spectral gap of $Z'_t$, so that $\mathrm{Var}\left(P_u \varphi_v(Z'_1)\right) \leq (1-\tau)\mathrm{Var}\left(\varphi_v(Z'_1)\right)$. 
        Thus, with the Cauchy Schwartz inequality, we get,
		\begin{align*}
			\left|\mathrm{Cov}(\varphi_v(Z'_1)\varphi_v(Z'_u))\right| \leq (1-\tau)^u\mathrm{Var}(\varphi_v(Z'_1)) .
		\end{align*}
		Since $\tau \geq \frac{p}{k}$, we obtain the result by summing the geometric series.
	\end{proof}
	We can now prove Proposition \ref{prop:non-degeneracy}.
	\begin{proof}[Proof of Proposition \ref{prop:non-degeneracy}]
		With Lemma \ref{lem:BEedges} and Lemma \ref{lem:Yvar} we get
        \begin{equation} \label{eq:MarkovCLT}
            \sup\limits_{x\in \mathbb{R}}\left|\mathbb{P}\left(\sigma_v^{-1}\left(\overline Y'_T - \E[\overline Y'_T]\right)\leq x\right) - \mathbb{P}\left(G \leq x\right)\right| \leq C\frac{k^{7}}{p^{2}}\frac{\|f\|_{\infty}^3}{\mathrm{Var}(\varphi_v(Z'_1))^\frac{3}{2}\sqrt{T}}.
        \end{equation}
        We begin by bounding $\mathrm{Var}(\varphi_v(Z'_1))$ from below by decomposing the functional as $\varphi_v = \sum\limits_{j=1}^k\varphi_{v,j}$, where for each $i \in [k]$, 
    $$\varphi_{v,j}(x,y) = (f(x)-f(y))(x_j-y_j)v_j,$$
    with $\varphi_{v,j}(Z'_1)\varphi_{v,j'}(Z'_1) = 0$ almost surely whenever $j \neq j'$. Thus, since $v_j \in \{-2,0,2\}$
    \begin{align*}
        \mathbb{E}\left[\varphi^2_v(Z'_1)\right] = \sum\limits_{j=1}^k \mathbb{E}\left[\varphi^2_{v,j}(Z'_1)\right]
        = \frac{16p}{k}\sum\limits_{v_j \neq 0} \mathbb{E}\left[(f(X)- f(X^{(j)}))^2\right],
    \end{align*}
    where $X$ is uniform on $\{\pm1\}^k$ and $X^{(j)}$ is obtained from $X$ by flipping the $j$'th coordinate. 
    Observe that 
    $$\mathbb{E}\left[(f(X)- f(X^{(j)}))^2\right] = 4\mathrm{Inf}_j(f) = 4\sum\limits_{j \in S}\hat{f}^2_S,$$
    and 
    $$\mathbb{E}\left[\varphi^2_v(Z'_1)\right] = \frac{64p}{k}\sum\limits_{v_j \neq 0}\mathrm{Inf}_j(f).$$
    On the other hand, by Lemma \ref{lem:Yexpec} we have
    \begin{align*}
        \mathbb{E}\left[\varphi_v(Z'_1)\right]^2 = \frac{16p^2}{k^2}\left(\sum\limits_{j=1}^k\hat{f}_jv_j\right)^2 \leq \frac{16p^2}{k^2} \sum\limits_{j=1}^k v_j^2\sum\limits_{v_j \neq 0} \hat f^2_j \leq \frac{64p^2}{k} \sum\limits_{v_j \neq 0}\mathrm{Inf}_j(f),
    \end{align*}
    where we have used the Cauchy-Schwartz inequality and the fact that $v_j \in \{-2,0,2\}$. Combining, and using the fact that $p \leq \frac{1}{2}$, we obtain
    $$\mathrm{Var}(\varphi_v(Z'_1) \geq \frac{32p}{k}\sum\limits_{v_j\neq 0}\mathrm{Inf}_j(f) \geq \frac{32p}{k}\min_{j\in[k]}\mathrm{Inf}_j(f).$$
    Plugging his into \eqref{eq:MarkovCLT}, we get,
     \begin{equation} 
            \sup\limits_{x\in \mathbb{R}}\left|\mathbb{P}\left(\sigma_v^{-1}\left(\overline Y'_T - \E[\overline Y'_T]\right)\leq x\right) - \mathbb{P}\left(G \leq x\right)\right| \leq C'\frac{k^{\frac{17}{2}}}{p^{\frac{7}{2}}}\frac{\|f\|^3_{\infty}}{\min\limits_{j\in[k]}\mathrm{Inf}_j(f)^\frac{3}{2}\sqrt{T}}.
        \end{equation}
        Recall that for $B \in \mathbb{N}$, we've defined the random variable 
        $$B_k := \#\{0\leq u \leq B \mid  j_t \in [k]\},$$
        counting the number of potential flips for the first $k$ coordinates.
        Consider the event $A := \{\frac{k}{2d}\leq \frac{B_k}{B} \leq \frac{3k}{2d}\}$.
        As a sum of independent indicator variables, by Chernoff's inequality, we have
        $$\mathbb{P}\left(A\right) \geq 1 - 2e^{-\frac{k}{12d}B}.$$
        Further note that under $A$ we can invoke \eqref{eq:MarkovCLT} for $T = B_k$ and choose $B$ to force the error bound to be of order $\varepsilon$.
		Thus, as long as $B \geq B_0(\varepsilon) = 4\frac{C^2}{\varepsilon^2}\frac{dk^{16}}{p^7}$, since the Gaussian density is bounded uniformly by $\frac{1}{\sqrt{2\pi}}$,
		we get 
		 \begin{align*}
		     \mathbb{P}\Bigl(
		 \Bigl|\sum_{j=1}^k G_j(B)\,\bigl(s_j - r_j\bigr)\Bigr|
		\leq \varepsilon
		 \Bigr) &= \mathbb{P}\left(|\overline Y_B| \leq \varepsilon\right)  = \mathbb{P}\left(\eta\sqrt{B_k}\sigma_v^{-1}|\overline Y_{B_k}| \leq \sigma_v^{-1}\varepsilon\right)\\
         &\leq\mathbb{P}\left(\eta\sqrt{B_k}\sigma_v^{-1}|\overline Y_{B_k}| \leq \sigma_v^{-1}\varepsilon\Big|A\right) + \mathbb{P}(A^c) \\
         &\leq 10\sigma^{-1}_v\varepsilon \leq 10\varepsilon\sqrt{\frac{k}{p\min\mathrm{Inf}_j(f)}}
		 \end{align*}
         where in the inequality before last we have used that $\eta \geq \sqrt{\frac{2d}{Bk}}$, and so under $A$, $\frac{1}{\eta\sqrt{B_k}}\leq 1.$
	\end{proof}

    \subsection{Markov chain linear regression - results of Phase II }
    Given the non-degeneracy promised by Proposition \ref{prop:non-degeneracy}, the main idea is that the second phase of Algorithm \ref{alg:layerwise-sgd} now solves a well-posed linear regression problem. The main technical detail that requires accounting for is that the samples are not i.i.d., and so we need to appeal to results concerning Markov chains. We shall first prove that there is indeed a solution to the linear regression presented in \eqref{eq:linearreg}, and then show that SGD performed on the random walk can find this solution.
    \subsubsection{Certificate}
    As explained, we begin by showing that Proposition \ref{prop:non-degeneracy} implies the existence of a solution to \eqref{eq:linearreg}.
	\begin{lemma}[Certificate] \label{lem:certificate}
        Under the same conditions of Proposition \ref{prop:non-degeneracy}, let $b_i \overset{iid}{\sim} \Unif [-A,A]$, with $A \geq\|w_{[k]}^{(1)}\|_{1}+\varepsilon$ and $i \in [N]$, with $N = \Omega(\frac{\log(d)A}{\varepsilon k})$. Then, there exists a constant $C = C(k,\min\limits_{i \in[k]}\mathrm{Inf}_i(f))$, such that with probability at least $1-C\varepsilon$, there exists $a^* \in \bR^N$, with $ \|a^* \|_\infty =O(\|f\|_{\infty}/\varepsilon) $ such that 
		\begin{align} \label{eq:astar}
			f(x) = \sum_{i=1}^N a_i^* \ReLU(\langle w_i^{(1)}, x\rangle + b_i),
		\end{align}
		for all $x \in \{ \pm 1 \}^d$.
	\end{lemma}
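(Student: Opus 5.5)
The plan is to reduce \eqref{eq:astar} to a one-dimensional interpolation problem along the direction $w := w^{(1)}$. This is possible because all hidden units share the same first-layer weight vector, and that vector is supported on $[k]$ by \eqref{eq:GT-coordinate}. Every neuron is then a function of the scalar $z(x) := \langle w_{[k]}, x_{[k]}\rangle$. Over $x\in\{\pm1\}^d$ this scalar takes at most $2^k$ values $z_1<\dots<z_m$, all lying in $[-\|w_{[k]}\|_1,\|w_{[k]}\|_1]$.

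\textbf{Step 1: well-definedness and separation.} Apply Proposition~\ref{prop:non-degeneracy} to each of the at most $4^k$ pairs $s,r$ with $\tilde f(s)\neq\tilde f(r)$ and take a union bound. With probability at least $1-4^k\varepsilon\sqrt{Ck/(p\tau)}$ we then have $|\langle w_{[k]}, s-r\rangle|>\varepsilon$ for every such pair. On this event, $\tilde f(s)$ depends only on $z=\langle w_{[k]},s\rangle$, so there is a function $g$ on $\{z_1,\dots,z_m\}$ with $f(x)=g(z(x))$. Moreover, whenever $g(z_\ell)\ne g(z_{\ell'})$ we have $|z_\ell-z_{\ell'}|>\varepsilon$. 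Values of $z$ that carry the same $g$-value may be arbitrarily close, so I will merge consecutive equal-valued points into clusters. Adjacent clusters with different values are then separated by gaps larger than $\varepsilon$.

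\textbf{Step 2: piecewise-linear interpolation with ReLUs.} Let $h:\mathbb R\to\mathbb R$ be the continuous piecewise-linear function defined as follows. It is constant equal to $g$ on each cluster, including the stretch between the leftmost and rightmost points of a cluster. In each gap between adjacent clusters it is linear on a subinterval of length $\varepsilon/2$ and constant elsewhere, with breakpoints placed at points we can match to biases. Then $h$ has slopes bounded by $2\cdot 2\|f\|_\infty/\varepsilon$, and at most $2\cdot 2^k$ breakpoints, all inside $[-\|w_{[k]}\|_1-\varepsilon,\|w_{[k]}\|_1+\varepsilon]\subseteq[-A,A]$. Any such function can be written as $c+\sum_\beta \Delta_\beta \ReLU(z-\beta)$, where the slope jumps satisfy $|\Delta_\beta|=O(\|f\|_\infty/\varepsilon)$. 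The constant $c$ comes from a neuron whose bias $\beta$ lies below $z_1$: on the data that neuron is affine in $z$, and combined with a second such neuron it yields a constant. Since the breakpoints only need to lie somewhere inside the gaps and in the left margin, we have freedom to place them at the random biases.

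\textbf{Step 3: random biases hit every needed window.} A neuron $\ReLU(z+b_i)$ has its breakpoint at $-b_i$. The task is to show that, with high probability, each of the $O(2^k)$ target windows contains some $-b_i$; each window has length of order $\varepsilon$ (we need two points in the gap, plus two in the left margin $[-A,z_1)$). Each window is hit by a given bias with probability at least $\varepsilon/(4A)$. Hence the probability that a window is missed by all $N$ biases is at most $(1-\varepsilon/(4A))^N\le e^{-N\varepsilon/(4A)}$. With $N=\Omega(\log(d)A/(\varepsilon k))$, or with the $2^k$ factor absorbed into the constant as in Theorem~\ref{thm:main_juntas_fd}, a union bound over windows gives failure probability $o(\varepsilon)$. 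Setting $a_i^*$ equal to the required slope jumps on the chosen neurons and $0$ elsewhere yields \eqref{eq:astar} with $\|a^*\|_\infty=O(\|f\|_\infty/\varepsilon)$. The constant term needs care, since $c$ may be as large as roughly $A\cdot\|f\|_\infty/\varepsilon$. I would avoid this by anchoring $h\equiv g(z_1)$ to the left of $z_1$ and generating this constant with two left-margin neurons at biases $\beta_1<\beta_2<z_1$, separated by at least about $\varepsilon$. Their combination $(\ReLU(z-\beta_1)-\ReLU(z-\beta_2))/(\beta_2-\beta_1)$ equals $1$ on the data, so the coefficients are $O(\|f\|_\infty/\varepsilon)$.

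\textbf{Main obstacle.} I expect the delicate part to be Step 2 together with the coefficient bound. We must make sure that near-coincident $z$-values with equal $g$-values never force steep slopes, which the clustering handles. We must also make sure the allowed windows have length proportional to $\varepsilon$, so that both the hitting probability and the coefficient size are controlled. The probability bound $1-C\varepsilon$ then collects the union bound from Step 1, with $C$ depending on $k$ and $\tau$, and the negligible failure probability from Step 3.
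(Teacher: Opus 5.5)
Your proposal is correct and follows essentially the same route as the paper. Both reduce to the scalar projection $\langle w^{(1)}_{[k]},x_{[k]}\rangle$, union-bound Proposition~\ref{prop:non-degeneracy} over all pairs to get $\varepsilon$-separation of differently-labelled values, hit $O(2^k)$ windows of width $\Theta(\varepsilon)$ with the random biases, and build $f$ from ramps $(\ReLU(z-\beta^-)-\ReLU(z-\beta^+))/(\beta^+-\beta^-)$, which are the paper's $H_j$, including the left-margin ramp that produces the constant term, giving coefficients $O(\|f\|_\infty/\varepsilon)$. Your explicit merging of nearby equal-valued points into clusters, with the windows placed inside the gaps between clusters, is if anything a cleaner version of the paper's remark that it ``will not distinguish between such values.''
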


	\begin{proof}
		Fix $\varepsilon>0$. By applying the union bound to Proposition~\ref{prop:non-degeneracy}, there exists a constant $C = C(k,\min\limits_{i \in[k]}\mathrm{Inf}_i(f))$ such that with probability at least $1-C\varepsilon$, $w^{(1)}$ is such that, for every $s,t \in \{\pm 1\}^k$ with $\tilde{f}(s) \neq \tilde{f}(t)$:
		\begin{align} \label{eq:seper}
			\Big| \sum_{j=1}^k w_j^{(1)} (s_j-t_j)  \Big| > \varepsilon.
		\end{align}
		Define the projected values
		\begin{align}
			v(s) := \sum_{j=1}^k s_j w_j^{(1)}, \qquad s \in \{ \pm 1 \}^k.
		\end{align}
		Note that $v(s)$ takes at most $M \leq 2^k$ different values, which we order as $U= \{ u_1 < ...< u_M\}$.  
        For these values \eqref{eq:seper} implies
        $$\min_{m \in [M-1]} (u_{m+1}-u_m) > \varepsilon,$$
        unless $u_{m+1}$ and $u_{m}$ corresponds to two values $s,t \in \{\pm\}^k$ for which $\tilde{f}(s) = \tilde{f}(t)$, and so we will not distinguish between such values.
        Consider the interval $I= [-u_{M} -\varepsilon, - u_1 +\varepsilon ] $. Note that $I \subseteq [-A,A]$, with $A = \|w_{[k]}^{(1)}\|_{1}+\varepsilon$. Further consider the following $2M$ disjoint sub-intervals of $I$, each one of width $\varepsilon/8$: $I^{\pm}_m:= [-u_m\pm\varepsilon/8, -u_m\pm\varepsilon/4]$. Now, if $N$ biases are sampled uniformly from $[-A,A]$, for each $j\in [M]$, the probability that none of the $N$ samples falls into $I_j$ is 
		\begin{align*}
			\left( 1- \frac{\varepsilon/8}{2A} \right)^{N} \leq \exp\left(- \frac{\varepsilon}{16A}N \right).
		\end{align*}
		By a union bound, and provided that $N \geq \frac{C' \log(d) A}{\varepsilon }$, the probability that every interval $I_j$ is hit at least once is bounded from below by
		\begin{align*}
			1- M \exp\left(- \frac{\varepsilon}{2A}N \right) \geq 1-d^{-C'/2}.
		\end{align*}
		For each $j \in [M]$, choose one $b_{h_j} $ such that $b^{\pm}_{h_j}  \in I^{\pm}_j$, and denote by $J = \{ b^{\pm}_j:j\in [M]\} \subset [n]$ such a set of representatives. We will show that the set $-J$ consists of the promised biases.
        Consider the linear combination of neurons
        $$H_j(x) = \frac{\ReLU(\langle w^{(1)},x\rangle- b_j^{-}) - \ReLU( \langle w^{(1)},x\rangle- b_j^{+})}{b_j^{+} - b_j^{-}}.$$
        Since $b_j^- \leq  b_j^+$, the following conditions are easy to verify: $H_j(x) = 1$ when $\langle w^{(1)},x\rangle > b^+_j$ and $H_j(x) = 0$ when $\langle w^{(1)},x\rangle < b^-_j$.
        It follows that there exists a sequence of numbers $c_j$ such that for every $x \in \{\pm 1\}^d$,
        $$f(x) = \sum\limits_{j=1}^M (c_{j} - c_{j-1})H_j(x).$$
        Specifically, we can build $c_j$, starting from $c_0 = 0$, to satisfy $c_j H_j(x) = f(x)$. Note that the choice of the set $U$ makes this construction 
        well-defined. This construction thus constitutes the network promised in the statement of the theorem.
        Moreover, the vector $a^*$ has components of the form 
        $$\frac{c_j-{c_{j-1}}}{b_j^+ - b_j^-} \leq 10\frac{\|f\|_{\infty}}{\varepsilon},$$
        where we have used that $b_j^+ - b_j^{-} \geq \frac{\varepsilon}{4}$.
        
	\end{proof}
	

\subsubsection{Second Layer Training}
We now analyze Phase II using the strongly convex Markov-chain SGD bound of \cite[Cor.~1]{even2023stochastic}. We shall first explain how to set up the problem as a strongly convex and smooth linear regression problem and bound its condition number.
Since $w^{(1)}_j=0$ for all $j>k$, both $f$ and the feature map induced by the frozen first layer depend only on the first $k$ coordinates. It is therefore enough to work with the projected chain,
\[
v^{(t)}:=x^{(t)}_{[k]} \in \{\pm 1\}^k .
\]
Let $\hat{b} \in \bR^N$ be a vector of biases with coordinates i.i.d. from $\mathrm{Unif}[-A,A]$, as in Lemma \ref{lem:certificate}. Using these biases, for $x\in \{\pm 1\}^k$, define the feature map
\[
\Phi(x):= \Bigl(\ReLU(\langle w^{(1)},x\rangle+\hat b_i)\Bigr)_{i=1}^N \in \bR^N,
\qquad
R_\Phi:= \max_{x\in\{\pm1\}^k}\|\Phi(x)\|_2.
\]
Then
\[
\NN(x;(w^{(1)},a,\hat b)) = \sum\limits_{i=1}^N a_i \ReLU\left(\langle w^{(1)},x\rangle + \hat{b}_i\right) = a^\top \Phi(x_{[k]}).
\]
Moreover, $(v^{(t)})_{t\ge 0}$ is a Markov chain on $\{\pm1\}^k$ with transition kernel
\[
P_k(s,s')=
\begin{cases}
p/d, & \text{if } s'=s^{(j)} \text{ for some } j\in[k],\\
1-kp/d, & \text{if } s'=s,\\
0, & \text{otherwise,}
\end{cases}
\]
where $s^{(j)}$ denotes $s$ with the $j^{\mathrm{th}}$ coordinate flipped.
This chain is reversible with stationary law $\pi=\Unif(\{\pm1\}^k)$, and its characters are eigenfunctions with eigenvalues
\[
\lambda_S = 1-\frac{2p|S|}{d}, \qquad S\subseteq[k].
\]
Hence the spectral gap is $2p/d$. 
We denote by
\[
\tau_{\rm mix}
:=
\inf\Bigl\{
t\ge 1:\ \forall \nu,\ d_{\rm TV}(\nu P^t,\pi)\le 1/4
\Bigr\},
\]
the total-variation mixing time of the sampling chain, where $P$ is the transition kernel and $\pi$ its stationary distribution. In our case, 
\begin{equation}\label{eq:phaseII-mixing}
\tau_{\rm mix} \le \frac{d}{2p}\log(2^{2k+1}) = O\!\left(\frac{dk}{p}\right).
\end{equation}
For $\lambda>0$, define the regularized per-state loss
\[
\ell_\lambda(s;a):=
\frac12\bigl(a^\top \Phi(s)-\tilde f(s)\bigr)^2
+\frac{\lambda}{2}\|a\|_2^2,
\]
and the total loss
\[
L_\lambda(a):=
2^{-k}\sum_{s\in\{\pm1\}^k}\ell_\lambda(s;a).
\]
Let $a_\lambda^\star=\arg\min L_\lambda(a)$. For each $s\in\{\pm1\}^k$,
\[
\nabla_a \ell_\lambda(s;a)
=
\bigl(a^\top \Phi(s)-\tilde f(s)\bigr)\Phi(s)+\lambda a,
\qquad
\nabla_a^2 \ell_\lambda(s;a)
=
\Phi(s)\Phi(s)^\top+\lambda I.
\]
Therefore $\ell_\lambda(s;\cdot)$ is $\lambda$-strongly convex and $M_\lambda$-smooth, with 
$M_\lambda:=R_\Phi^2+\lambda$ and condition number $
\kappa_\lambda:=\frac{M_\lambda}{\lambda}.
$
Let $a^*$ be the vector promised by Lemma \ref{lem:certificate}, so that
\[
a^{*\top}\Phi(s)=\tilde f(s)
\qquad
\forall s\in\{\pm1\}^k.
\]
By optimality of $a_\lambda^\star$,
\begin{equation}\label{eq:ridge-comparison}
L_\lambda(a_\lambda^\star)
\le
L_\lambda(a^*)
=
\frac{\lambda}{2}\|a^*\|_2^2.
\end{equation}
In particular, since both terms are positive:
\begin{equation}\label{eq:ridge-bias-bound}
2^{-k}\sum_{s\in\{\pm1\}^k}\bigl(a_\lambda^{\star\top}\Phi(s)-\tilde f(s)\bigr)^2
\le
\lambda \|a^*\|_2^2,
\qquad \text{ and, } \qquad 
\|a_\lambda^\star\|_2\le \|a^*\|_2.
\end{equation}
Define
\[
\sigma_{\lambda,\star}^2
:=
\max_{s\in\{\pm1\}^k}\|\nabla_a \ell_\lambda(s;a_\lambda^\star)\|_2^2.
\]
Then, using \eqref{eq:ridge-bias-bound},
\begin{align*}
\max_{s\in\{\pm1\}^k}\bigl|a_\lambda^{\star\top}\Phi(s)-\tilde f(s)\bigr|
&\le
2^{k/2}
\left(
2^{-k}\sum_{u\in\{\pm1\}^k}
\bigl(a_\lambda^{\star\top}\Phi(u)-\tilde f(u)\bigr)^2
\right)^{1/2}
\nonumber\\
&\le 2^{k/2}\sqrt{\lambda}\,\|a^*\|_2.
\end{align*}
Hence
\begin{equation}\label{eq:sigma-lambda-star}
\sigma_{\lambda,\star}^2
\le
\Bigl(2^{k+1}R_\Phi^2\lambda + 2\lambda^2\Bigr)\|a^*\|_2^2
\le
\bigl(2^{k+1}+2\bigr)M_\lambda\lambda \|a^*\|_2^2.
\end{equation}
Finally, note that
\begin{equation}\label{eq:Rphi-bound}
R_\Phi \le \sqrt{N}\bigl(\|w^{(1)}\|_1+A\bigr),
\qquad
M_\lambda \le N\bigl(\|w^{(1)}\|_1+A\bigr)^2+\lambda.
\end{equation}

\begin{proposition}[Phase II: convergence of the second layer]\label{prop:phaseII-even}
Assume the event of Proposition~\ref{prop:non-degeneracy}. Fix $\lambda>0$ and run Phase II with the update
\[
a^{(t+1)}
=
a^{(t)}
-
\gamma_2 \nabla_a \ell_\lambda(v^{(t)};a^{(t)}),
\]
with step size
\begin{align*}
 \gamma_2 =
\min\!\left\{
\frac{1}{M_\lambda},
\;
\frac{1}{\lambda T_2}
\log\!\left(
\frac{\lambda^2 T_2 \,\|a^{(0)}-a_\lambda^\star\|_2^2}
{M_\lambda\,\tau_{\rm mix}\,\sigma_{\lambda,\star}^2}
\right)
\right\}.
\end{align*}
Then, 
\begin{equation}\label{eq:even-distance}
\E\bigl[\|a^{(T_2)}-a_\lambda^\star\|_2^2\bigr]
\le
2e^{-T_2/\kappa_\lambda}\|a^{(0)}-a_\lambda^\star\|_2^2
+
\widetilde{\mathcal O}\!\left(
\frac{M_\lambda \tau_{\rm mix}\sigma_{\lambda,\star}^2}{\lambda^3 T_2}
\right).
\end{equation}
Consequently,
\begin{align}
&\E_{x\sim\Unif(\{\pm1\}^d)}
\Bigl[
\bigl(\NN(x;(w^{(1)},a^{(T_2)},\hat b))-f(x)\bigr)^2
\Bigr]
\nonumber\\
&\qquad\le
\lambda\|a^*\|_2^2
+
2L_\lambda e^{-T_2/\kappa_\lambda}\|a^{(0)}-a_\lambda^\star\|_2^2
+
\widetilde{\mathcal O}\!\left(
\frac{L_\lambda^3\tau_{\rm mix}\|a^*\|_2^2}{\lambda^2 T_2}
\right).
\label{eq:phaseII-pop-risk}
\end{align}
\end{proposition}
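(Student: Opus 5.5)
The plan is to reduce Proposition~\ref{prop:phaseII-even} to the strongly convex Markov-chain SGD guarantee of \cite[Cor.~1]{even2023stochastic}, applied to the projected chain $(v^{(t)})$ on $\{\pm1\}^k$, and then to convert parameter error into population risk.

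\textbf{Step 1: verify the hypotheses of \cite[Cor.~1]{even2023stochastic}.} The per-state losses $\ell_\lambda(s;\cdot)$ are $\lambda$-strongly convex and $M_\lambda$-smooth, as computed above. Their average under the stationary law $\pi=\Unif(\{\pm1\}^k)$ is exactly $L_\lambda$, with minimizer $a^\star_\lambda$. The gradient noise at the optimum is bounded uniformly over states by $\sigma^2_{\lambda,\star}$, and the chain is reversible with mixing time $\tau_{\rm mix}$ bounded in \eqref{eq:phaseII-mixing}. The chosen $\gamma_2$ is precisely the step size of that corollary. Its conclusion is a bound of the form
\[
2e^{-\gamma_2\lambda T_2}\|a^{(0)}-a^\star_\lambda\|^2+\widetilde{\mathcal O}\!\left(\frac{\gamma_2 M_\lambda\tau_{\rm mix}\sigma^2_{\lambda,\star}}{\lambda}\right).
\]
With $\gamma_2$ either $1/M_\lambda$ or the logarithmic choice, this yields \eqref{eq:even-distance}: the first branch gives the exponential term $e^{-T_2/\kappa_\lambda}$, and the second gives the $\widetilde{\mathcal O}\bigl(M_\lambda\tau_{\rm mix}\sigma^2_{\lambda,\star}/(\lambda^3T_2)\bigr)$ term.

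The one subtlety is that the walk is not started in $\pi$ conditioned on the Phase I data. However, Phase II uses fresh steps, and $x^{(0)}$ is uniform, so the projected chain remains stationary. In any case, the corollary tolerates an arbitrary initial law up to a burn-in of order $\tau_{\rm mix}$, which is absorbed into the logarithmic factors.

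\textbf{Step 2: convert to population risk.} Since $f$ and the network depend only on $x_{[k]}$, and the marginal of uniform $x$ on $[k]$ is uniform, the risk equals
\[
2^{-k}\sum_s\bigl(a^{(T_2)\top}\Phi(s)-\tilde f(s)\bigr)^2 .
\]
Write $a^{(T_2)}=a^\star_\lambda+\Delta$ and use $(x+y)^2\le 2x^2+2y^2$ (or absorb the constants). The bias part is at most $\lambda\|a^*\|^2$ by \eqref{eq:ridge-bias-bound}. The fluctuation part is
\[
2^{-k}\sum_s(\Delta^\top\Phi(s))^2\le R_\Phi^2\|\Delta\|^2\le M_\lambda\|\Delta\|^2 .
\]
Taking expectations and inserting \eqref{eq:even-distance} gives the two remaining terms. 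In the last term, substitute \eqref{eq:sigma-lambda-star}, i.e. $\sigma^2_{\lambda,\star}\lesssim 2^k M_\lambda\lambda\|a^*\|^2$. This produces $M_\lambda^3\tau_{\rm mix}\|a^*\|^2/(\lambda^2T_2)$, where the $2^k$ factor is absorbed into the $k$-dependent constants and $L_\lambda$ in the statement is read as the smoothness constant $M_\lambda$.

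\textbf{Main obstacle.} The delicate point is Step~1, specifically matching the normalization conventions of \cite{even2023stochastic}. That result defines the noise as $\sup_s\|\nabla\ell(s;a^\star)\|^2$ and its mixing time in total variation for the Markov sampler, so both must be checked against ours. I would first check that their bounded-heterogeneity assumption at the optimum is exactly our uniform bound $\sigma^2_{\lambda,\star}$; no global gradient bound is needed, thanks to smoothness. I would then track the factor $\gamma_2\lambda T_2$ in the exponent to recover $T_2/\kappa_\lambda$ when $\gamma_2=1/M_\lambda$.
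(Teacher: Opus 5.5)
Your plan follows the paper's proof. You read \eqref{eq:even-distance} off from \cite[Cor.~1]{even2023stochastic} for the projected chain on $\{\pm1\}^k$. You then bound $\mathcal E(a)=2^{-k}\sum_s\bigl(a^\top\Phi(s)-\tilde f(s)\bigr)^2$ by a bias term plus $M_\lambda\|a-a^\star_\lambda\|_2^2$ and insert \eqref{eq:sigma-lambda-star}. Your reading of $L_\lambda$ in \eqref{eq:phaseII-pop-risk} as $M_\lambda$ is what the paper's argument actually produces.

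\textbf{Step~2 loses a factor of $2$.} Writing $a^{(T_2)}=a^\star_\lambda+\Delta$ and using $(x+y)^2\le 2x^2+2y^2$ only yields $\mathcal E(a^{(T_2)})\le 2\lambda\|a^*\|_2^2+2M_\lambda\|\Delta\|_2^2$. That gives $2\lambda\|a^*\|_2^2+4M_\lambda e^{-T_2/\kappa_\lambda}\|a^{(0)}-a^\star_\lambda\|_2^2+\cdots$. The term $\lambda\|a^*\|_2^2$ has an explicit constant outside any $\widetilde{\mathcal O}$, so the factor $2$ cannot be absorbed.

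What you discard is the first-order optimality of $a^\star_\lambda$. The paper uses it through smoothness of $L_\lambda$ at its minimizer:
\[
\tfrac12\,\mathcal E(a)\le L_\lambda(a)\le L_\lambda(a^\star_\lambda)+\tfrac{M_\lambda}{2}\|a-a^\star_\lambda\|_2^2\le \tfrac{\lambda}{2}\|a^*\|_2^2+\tfrac{M_\lambda}{2}\|a-a^\star_\lambda\|_2^2 ,
\]
where the last step is \eqref{eq:ridge-comparison}. Equivalently, in your own decomposition, $\nabla L_\lambda(a^\star_\lambda)=0$ makes the cross term exactly $-2\lambda\langle a^\star_\lambda,\Delta\rangle\le\lambda\|a^\star_\lambda\|_2^2+\lambda\|\Delta\|_2^2$. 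Hence $\mathcal E(a)\le 2L_\lambda(a^\star_\lambda)+(R_\Phi^2+\lambda)\|\Delta\|_2^2\le\lambda\|a^*\|_2^2+M_\lambda\|\Delta\|_2^2$. Downstream the lost factor would only rescale \eqref{eq:lambda-choice}, but Step~2 needs this correction to prove \eqref{eq:phaseII-pop-risk} as stated.

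\textbf{Two smaller points in Step~1.} First, the form of the corollary you quote has noise term $\gamma_2M_\lambda\tau_{\rm mix}\sigma^2_{\lambda,\star}/\lambda$. With the logarithmic branch of $\gamma_2$ this produces $\widetilde{\mathcal O}\bigl(M_\lambda\tau_{\rm mix}\sigma^2_{\lambda,\star}/(\lambda^2T_2)\bigr)$, not a $\lambda^{-3}$ term. That implies \eqref{eq:even-distance} only when $\lambda\le1$ (the regime used later), so you should quote the corollary's rate rather than reconstruct it.

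Second, your first justification of stationarity is wrong. Conditionally on Phase~I, which determines $w^{(1)}$, the Phase~II starting state is in general not uniform. For example, with $f=x_1x_2$, each flip of coordinate $1$ adds $4\eta x_2$ to $w^{(1)}_1$, so $w^{(1)}_1$ is correlated with $x^{(B)}_2$. Only your fallback is a valid reason: the guarantee holds from an arbitrary initial state, with $\tau_{\rm mix}$ defined as a supremum over initial laws.
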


\begin{proof}
Apply Corollary~1 of \cite{even2023stochastic} to the finite family of functions
\[
\bigl\{\ell_\lambda(s;\cdot): s\in\{\pm1\}^k\bigr\},
\]
using the facts established above: each $\ell_\lambda(s;\cdot)$ is $\lambda$-strongly convex and $L_\lambda$-smooth, the sampling chain is $(v^{(t)})_{t\ge 0}$, and $\sigma_{\lambda,\star}$ is bounded by \eqref{eq:sigma-lambda-star}. This gives \eqref{eq:even-distance}.

For the population error, define
\[
\mathcal E(a):=
2^{-k}\sum_{s\in\{\pm1\}^k}\bigl(a^\top \Phi(s)-\tilde f(s)\bigr)^2.
\]
Since both $f$ and $\Phi$ depend only on the first $k$ coordinates, $\mathcal E(a)$ is exactly the uniform $\{\pm1\}^d$-risk of the network with frozen first layer.
Further, since $\nabla L_\lambda(a_\lambda^\star)=0$ and $L_\lambda$ is $M_\lambda$-smooth,
\[
L_\lambda(a)
\le
L_\lambda(a_\lambda^\star)+\frac{M_\lambda}{2}\|a-a_\lambda^\star\|_2^2.
\]
Also, by definition of $L_\lambda$,
\[
\frac12 \mathcal E(a)\le L_\lambda(a).
\]
Combining the last two displays with \eqref{eq:ridge-comparison}, we obtain
\[
\mathcal E(a)
\le
\lambda\|a^*\|_2^2 + M_\lambda\|a-a_\lambda^\star\|_2^2.
\]
Taking $a=a^{(T_2)}$, expectations, and using \eqref{eq:even-distance} together with \eqref{eq:sigma-lambda-star} gives \eqref{eq:phaseII-pop-risk}.
\end{proof}
We now choose $\lambda$ and $T_2$. Set
\begin{equation}\label{eq:lambda-choice}
\lambda := \frac{\delta}{3\|a^*\|_2^2}.
\end{equation}
Then the first term on the right-hand side of \eqref{eq:phaseII-pop-risk} is at most $\delta/3$. It is therefore enough to choose $T_2$ so that
\begin{equation}\label{eq:T2-choice}
2M_\lambda e^{-T_2/\kappa_\lambda}\|a^{(0)}-a_\lambda^\star\|_2^2 \le \frac{\delta}{3}
\qquad\text{and}\qquad
\widetilde{\mathcal O}\!\left(
\frac{M_\lambda^3\tau_{\rm mix}\|a^*\|_2^2}{\lambda^2 T_2}
\right)
\le \frac{\delta}{3}.
\end{equation}
Equivalently, one may take
\[
T_2
=
\widetilde{\mathcal O}\!\left(
\kappa_\lambda \log\!\Bigl(\frac{M_\lambda\|a^{(0)}-a_\lambda^\star\|_2^2}{\delta}\Bigr)
+
\frac{M_\lambda^3\tau_{\rm mix}\|a^*\|_2^6}{\delta^3}
\right).
\]
Since $\tau_{\rm mix}=O(dk/p)$ by \eqref{eq:phaseII-mixing}, and since by the certificate lemma
\[
\|a^*\|_2^2 \le 2^k \|a^*\|_\infty^2 = O_k(\varepsilon^{-2}),
\]
this yields a linear bound in $d$ and polynomial bound in $1/\delta$.

	\subsection{Finishing the Proof}
    \begin{proof}[Proof of Theorem \ref{thm:main_juntas_fd}]
        We work with $p=\frac{1}{2}$ in the lazy random walk. Thus, taking $B \geq C\frac{k^{15}d}{\varepsilon^2\tau^3}$ and $\gamma_1 = \frac{\sqrt{2Bd}}{\kappa\sqrt{k}}$, Proposition \ref{prop:non-degeneracy} applies and at the end of Phase I, the weights vector $w^{(1)}$ satisfies \eqref{eq:non-degeneracy} with the given probability. Moreover, the expression \ref{eq:GT-coordinate}, and the variance bound from Lemma \ref{lem:Yvar}, make it clear that with high probability $\|w^{(1)}\|_1 = O_k\left(\frac{\|f\|_{\infty}}{\varepsilon}\right)$, and so with our choice of $N$ and $A$, Lemma \ref{lem:certificate} applies as well. So, after sampling the biases $\hat{b}$, with probability $1-C\varepsilon$, there exists $a^*$, a weights vector for the second layer, satisfying \eqref{eq:astar}. 
        The proof now concludes by applying Proposition \ref{prop:phaseII-even} and using the choices for $T_2$, $\gamma_2$, and $\lambda$.
    \end{proof}
	
	\section{Anti-concentration for additive functionals of Markov chains (Proof of Lemma~\ref{lem:BEedges})} \label{sec:ACmarkov}
	We consider the \emph{$p$-lazy edge random walk} of the discrete cube, defined as follows:
	The state space is $\Omega_k = \{(u,v) \in \{-1,1\}^k\times\{-1,1\}^k :\ \|u-v\|_1 \leq 2 \}$, i.e. the set of all oriented edges, augmented by self-loops, of the discrete $k$-dimensional cube. For the evolution of the chain, assume that for some $t\geq 0$, $Y_t = (u_t,v_t)$, then $Y_{t+1}$ is obtained by sampling performing one step of a lazy random walk starting from $v_t$. Call the new vertex $v_{t+1}$ and set $Y_{t+1} = (v_t, v_{t+1})$. In other words $Y_t$ is a random walk on the (directed) edge graph of $\{-1,1\}^n$, where the laziness is represented by the probability to move to, or stay at, a self-loop. With that perspective if $(u,v) \sim\pi_E$, the unique invariant measure, then $u \sim \mathrm{uniform}(\{-1,1\}^n)$ and $v$ is obtained from $u$ via a one-step $p$-lazy random walk.

	We shall require the following definitions: Let $C(\Omega_k)$ be the space of real-valued functions over $\Omega_k$. For $t\geq 0$, define the Markov semi-group $P_t$ by
	$$P_tf(y) = \E\left[f(Y_t)| Y_0 = y\right]\qquad \forall f\in C(\Omega_k).$$
	Further, if $\|\cdot\|$ is any norm on $C(\Omega_k)$, we say that $P_t$ is $\rho$-contracting with respect to $\|\cdot\|$, if for every $f \in C(\Omega_k)$ such that $\int f \pi_E = 0$,
	$$\|P_1f\| \leq \rho\|f\|,$$ 
	which also implies for every $t\geq 0$,
	$$\|P_tf\| \leq \rho^t\|f\|.$$
	On $C(\Omega_k)$ we can define the following norm $\|f\|_{\infty,L} :=\|f\|_{\infty} + \mathrm{Lip}(f),$ where $\|f\|_{\infty} = \max\limits_{x \in \Omega_k} |f(x)|$ and $\mathrm{Lip}(f)$ is the Lipschitz constant of $f$ with respect to the Hamming distance.
	
	It is a fact that $P_t$ is contracting for $\|\cdot\|_{\infty,L}$, see \cite{kloeckner2021toy} for the statement on the vertex random walk, the extension to the edge random walk is straightforward.
	\begin{lemma}[\cite{kloeckner2021toy}] \label{lem:contraction}
		The semigroup $P_t$ is $\rho$-contracting with respect to $\|\cdot\|_{\infty,L}$, with $\rho = 1 - \frac{2p}{k^2}$.
	\end{lemma}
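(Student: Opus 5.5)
The plan is a coupling argument: couple two copies of the edge chain and track how a Lipschitz function's values separate, which gives contraction of the Lipschitz seminorm. The sup-norm part is then handled by combining this with the mean-zero condition.

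\textbf{Coupling step.} Let $y=(u,v)$ and $y'=(u',v')$ be two states. Couple the next steps from $v$ and $v'$ by choosing the same coordinate $j$. Then set the new value of coordinate $j$ to the same uniformly random sign in both copies. This is the standard representation of the lazy walk: pick $j$, refresh the bit with probability $2p$, and since refreshing to a uniform bit flips with probability $1/2$, the overall flip probability is $p$. Under this coupling, whenever coordinate $j$ is refreshed, it agrees in the two copies afterwards. So the Hamming distance $H$ between the vertex components decreases by one with probability at least $(2p/k)\cdot H/k$ per step, and it never increases.

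\textbf{Lipschitz contraction.} The new edge states are $Y_1=(v,v_1)$ and $Y_1'=(v',v_1')$. Their distance in the edge metric is comparable to $d(v,v')+d(v_1,v_1')$. Since $d(v,v')\le d(y,y')$, we get
$$|P_1f(y)-P_1f(y')|\le \mathrm{Lip}(f)\,\E[d(Y_1,Y_1')].$$
The decay of $\E[d(v_1,v_1')]$ gives a factor of at most $1-\frac{2p}{k^2}$ on the new-vertex part. The old-vertex part, however, is just copied over, so a single step does not contract it.

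This copied component is the main obstacle. I would handle it by working with a weighted Hamming metric on edges, for instance $d((u,v),(u',v'))=\theta\, d_H(u,u')+d_H(v,v')$ with $\theta$ small. I would then show that $\E[d(Y_1,Y_1')]\le(1-\frac{2p}{k^2})\,d(y,y')$ for a suitable choice of $\theta$. The alternative is to note that $P_1f$ depends only on $v$, so after one step the first coordinate is irrelevant and the contraction holds from the second step onward; the constant $\rho$ then absorbs this one-step delay. Either route introduces only constants, and those would be absorbed into the choice of norm.

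\textbf{Sup-norm control.} For $f$ with $\int f\,d\pi_E=0$, we have $\|P_1f\|_\infty\le \mathrm{diam}(\Omega_k)\,\mathrm{Lip}(P_1f)$, since $P_1f$ vanishes somewhere. This bound would be too lossy if used naively. Instead, I would bound $\|P_1f\|_\infty\le\|f\|_\infty$ (the Markov operator is a sup-norm contraction) and put the decrease into the Lipschitz part. Combining the two as in \cite{kloeckner2021toy} then gives $\|P_1f\|_{\infty,L}\le\rho\|f\|_{\infty,L}$ with $\rho=1-\frac{2p}{k^2}$. If a factor-of-two adjustment in the weighting is needed, I would adjust the weights in the norm accordingly, following the same source.
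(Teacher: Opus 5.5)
\textbf{The obstacle you flag is fatal to the statement as written.} When the old vertex is copied over, this is not a constant that can be absorbed: it makes the lemma false for the edge chain, so no route proves it in that form. Take $f(u,v)=g(u)$ with $g$ nonconstant and mean zero on $\{\pm1\}^k$. Then $\int f\,d\pi_E=0$ and $P_1f(u,v)=g(v)=f(v,u)$. The swap $(u,v)\mapsto(v,u)$ is a bijection of $\Omega_k$ and an isometry of the Hamming distance, so $\|P_1f\|_{\infty,L}=\|f\|_{\infty,L}$ and no $\rho<1$ works. With $f(u,v)=u_1+v_1$ the norm even grows: $\|f\|_{\infty,L}=4$, while $P_1f=(2-\frac{2p}{k})v_1$ has $\|P_1f\|_{\infty,L}=6(1-\frac pk)>4$.

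So the one-step delay cannot be ``absorbed into $\rho$'', and re-weighting the metric changes the norm in the statement. Your two routes can deliver one of two modified claims, and you should say which one you prove:
\begin{itemize}
\item contraction for a weighted norm $\|\cdot\|_\infty+\mathrm{Lip}_{d_\theta}$, whose Lipschitz part can be as large as $2\|f\|_\infty/\theta$ (this matters downstream);
\item an eventual bound. Here $P_1f(u,v)=h(v)$ with $h(v)=\E[f(v,V_1)]$ mean zero and $\mathrm{Lip}(h)\le(2-\frac{2p}{k})\mathrm{Lip}(f)$, so the vertex-chain contraction gives $\|P_tf\|_{\infty,L}\le 2\rho^{t-1}\|f\|_{\infty,L}$.
\end{itemize}
The paper only cites the vertex-walk result and calls the edge extension straightforward; your observation is exactly where that claim breaks.

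\textbf{Two steps fail even for the vertex chain.} First, the coupling count is off. With the same $j$ and the same refreshed bit, the distance drops with probability $2pH/k$. So the Lipschitz rate is $r=1-\frac{2p}{k}$, not $1-\frac{2p}{k^2}$ (the refresh representation needs $p\le\frac12$). That slack is essential. With your rate, the weighted metric only gives $\E\, d_\theta(Y_1,Y_1')\le(\theta+1-\frac{2p}{k^2})\,d_H(v,v')$. For $u=u'$ this is worse than the target $(1-\frac{2p}{k^2})\,d_\theta(y,y')$ for every $\theta>0$. With the correct rate, any $\theta\le\frac{2p(k-1)}{k^2}$ works for the Lipschitz part.

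Second, the sup-norm step cannot work as written. The bounds $\|P_1f\|_\infty\le\|f\|_\infty$ and $\mathrm{Lip}(P_1f)\le r\,\mathrm{Lip}(f)$ only give $\|P_1f\|_{\infty,L}\le\|f\|_\infty+r\,\mathrm{Lip}(f)$, which is not $\le\rho\|f\|_{\infty,L}$. You must use the mean-zero condition quantitatively, for example $|P_1f(x)|=|\int f\,d(\delta_xP_1)-\int f\,d\pi|\le\mathrm{Lip}(f)\,W_1(\delta_xP_1,\pi)\le r\frac k2\,\mathrm{Lip}(f)$. Then interpolate with the trivial bound: $\|P_1f\|_\infty\le\rho\|f\|_\infty+(1-\rho)\,r\frac k2\,\mathrm{Lip}(f)$. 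This yields $\|P_1f\|_{\infty,L}\le\rho\|f\|_{\infty,L}$ once $r\bigl(1+(1-\rho)\frac k2\bigr)\le\rho$. For $r=1-\frac{2p}{k}$ and $1-\rho=\frac{2p}{k^2}$ this reduces to $k\ge 2-2p$; the case $k=1$ is a direct check.

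This interpolation is where the $k^2$ comes from: a Lipschitz gap of order $p/k$ spread over a diameter of order $k$. You instead got the $k^2$ from an undercounted coupling probability, which leaves nothing to contract the sup-norm part.
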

	
	Now, fix $f \in C(\Omega_k)$ and initialize $Y_1 \sim \pi_E$ according to the invariant measure. For $T \in\mathbb{N}$ we define the the additive functional 
	$$ S_T:= \frac{1}{\sqrt{T}\sigma_f}\sum\limits_{t=1}^T \left(f(Y_t) - \mathbb{E}\left[f(Y_1)\right]\right),$$
	where $\sigma_f^2$ is the \emph{dynamical variance} defined as
	$$\sigma_f^2 := \lim_{T\to \infty} \frac{1}{T}\mathrm{Var}\left(\sum\limits_{t=1}^T f(Y_t)\right) = \mathrm{Var}(f(Y_1)) + 2\sum\limits_{t=1}^\infty\mathrm{Cov}(Y_1,Y_t).$$
	Supposing that $0< \sigma_f <\infty$, since the process $Y_t$ is stationary we see that for every $T \in \mathbb{N}$, $\E[S_T] = 0$ and $\mathrm{Var}(S_T) \leq  \lim_{T \to \infty} \mathrm{Var}(S_T) = 1.$
	We should thus expect to have a central limit theorem for $S_T$. Indeed, this is the content of \cite[Theorem C]{kloeckner2019effective} with a quantitative error bound.
	\begin{proposition} \label{prop:markovCLT}
		Let $f \in C(\Omega_k)$ satisfy $\|f\|_{\infty} \leq M$, for some $M \geq 1$ and let $G$ be a standard Gaussian random variable. Then, 
		$$\sup\limits_{x\in \mathbb{R}}\left|\mathbb{P}\left(S_T \leq x\right) - \mathbb{P}\left(G \leq x\right)\right| \leq C\frac{k^4M^3}{p^2\min(\sigma_f^{3},\sigma_f)\sqrt{T}}.$$
		
	\end{proposition}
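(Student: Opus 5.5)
This is a direct application of the quantitative Markov-chain CLT of \cite[Theorem C]{kloeckner2019effective}. The plan is to verify its hypotheses for the edge walk $(Y_t)$ on $\Omega_k$ with the norm $\|\cdot\|_{\infty,L}$, and then track how the constants depend on $k$, $p$ and $M$.

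\textbf{Step 1: the abstract hypotheses.} Kloeckner's theorem requires three things:
\begin{itemize}
\item a Banach algebra of observables, here $(C(\Omega_k),\|\cdot\|_{\infty,L})$, on which the transfer operator acts;
\item a spectral gap, i.e.\ $\rho$-contraction on mean-zero functions;
\item an observable $f$ in that space with positive dynamical variance.
\end{itemize}
The norm $\|\cdot\|_{\infty,L}$ is submultiplicative up to a factor $2$, since $\mathrm{Lip}(fg)\le \|f\|_\infty\mathrm{Lip}(g)+\|g\|_\infty\mathrm{Lip}(f)$. This is what the theorem needs to control the perturbed operators $P_1(e^{i\xi f}\cdot)$. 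The contraction is supplied by Lemma~\ref{lem:contraction} with $\rho=1-2p/k^2$, and $\sigma_f>0$ is assumed through the $\min(\sigma_f^3,\sigma_f)$ in the bound. The chain is started from $\pi_E$, so no initial-distribution correction term arises.

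\textbf{Step 2: reading off the constants.} The Berry--Esseen bound of Theorem C has the form
\[
\frac{C}{\sqrt T}\cdot\frac{\|f\|^3}{\sigma_f^3}\cdot \mathrm{poly}\bigl((1-\rho)^{-1}\bigr),
\]
with an additional $\|f\|/\sigma_f$-type term coming from the moment corrections; combining the two gives the factor $\min(\sigma_f^3,\sigma_f)^{-1}$. We then bound:
\begin{itemize}
\item $\|f\|_{\infty,L}\le \|f\|_\infty+\mathrm{Lip}(f)\le 3M$, because any two points of $\Omega_k$ differ in Hamming distance by at least $1$ when distinct, so $\mathrm{Lip}(f)\le 2\|f\|_\infty$;
\item $(1-\rho)^{-1}=k^2/(2p)$.
\end{itemize}
If the theorem's dependence on the gap is quadratic, this yields the factor $k^4/p^2$. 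The cubic dependence $M^3$ comes from the third-moment term.

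\textbf{Main obstacle.} The difficulty is extracting this explicit dependence on $(1-\rho)$ and on $\|f\|_{\infty,L}$ from \cite{kloeckner2019effective}, whose constants are stated implicitly. I would re-run the key estimates of that proof: the resolvent bound $\|(I-P_1)^{-1}\|\le(1-\rho)^{-1}$ on mean-zero functions, and the Taylor expansion of the leading eigenvalue $\lambda(\xi)$ of the twisted operator for $|\xi|\lesssim(1-\rho)/\|f\|$. The goal is to confirm that the constant is at most quadratic in $(1-\rho)^{-1}$, which I would then feed into Esseen's smoothing inequality.
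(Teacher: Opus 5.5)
Your proposal follows the paper's proof: apply \cite[Theorem C]{kloeckner2019effective}, take $1-\rho = 2p/k^2$ from Lemma~\ref{lem:contraction}, and bound the $\|\cdot\|_{\infty,L}$-norm of the centred, $\sigma_f$-normalised observable $\tilde f=(f-\E[f(Y_1)])/\sigma_f$ by $4M/\sigma_f$ using $\mathrm{Lip}(f)\le 2\|f\|_\infty$. Then $\max\bigl(\|\tilde f\|_{\infty,L},\|\tilde f\|_{\infty,L}^3\bigr)$, together with $M\ge 1$, gives the $M^3/\min(\sigma_f^3,\sigma_f)$ factor; the paper takes the $(1-\rho)^{-2}$ dependence directly from the statement of Theorem C, so it does not carry out the re-derivation you list as the main obstacle.
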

	\begin{proof}
		First, define the function $\tilde{f} = \frac{f - \mathbb{E}[f(Y_1)]}{\sigma_f}$. According to \cite[Theorem C]{kloeckner2019effective} there exists a universal constant $C> 0$, such that
		$$\sup\limits_{x\in \mathbb{R}}\left|\mathbb{P}\left(S_T \leq x\right) - \mathbb{P}\left(G \leq x\right)\right| \leq C\frac{1}{\sqrt{T}(1-\rho)^2}\max\left(\|\tilde{f}\|_{\infty,L},\|\tilde{f}\|_{\infty,L}^3\right),$$
		where $\rho$ is the contraction coefficient with respect to $\|\cdot\|_{\infty,L}$.
		By Lemma \ref{lem:contraction}, $1-\rho = \frac{2p}{k^2}$.
		Further since $\left|\E[f(Y_1)]\right|  \leq \|f\|_{\infty}$ and $\mathrm{Lip}(f)\leq 2\|f\|_{\infty}$, we have $\|\tilde{f}\|_{\infty,L} \leq \frac{4\|f\|_{\infty}}{\sigma_f} \leq \frac{4M}{\sigma_f}$. The proof is complete
	\end{proof}
	Since the standard Gaussian density is bounded by $\frac{1}{\sqrt{2\pi}}$, the following corollary is now immediate.
	\begin{corollary} \label{corr:markovAC}
		Under the same conditions of Proposition \ref{prop:markovCLT}, let $\varepsilon > 0$. If $$T \geq C\varepsilon^{-2}\frac{M^6k^8}{p^4\min(\sigma_f^6,\sigma_f^2)},$$ for some universal constant $C >0$, then for any $a \in \mathbb{R}$,
		$$\mathbb{P}\left(|S_T -a|\leq \varepsilon\right) \leq \varepsilon.$$
	\end{corollary}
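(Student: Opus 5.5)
The corollary follows from Proposition~\ref{prop:markovCLT} by comparing small-ball probabilities of $S_T$ with those of the Gaussian. For any $a\in\mathbb{R}$ and $\varepsilon>0$, write
\[
\mathbb{P}(|S_T-a|\le\varepsilon) = \mathbb{P}(S_T\le a+\varepsilon) - \mathbb{P}(S_T< a-\varepsilon).
\]
I will bound each term by its Gaussian counterpart plus the Kolmogorov distance $D_T:=\sup_x|\mathbb{P}(S_T\le x)-\mathbb{P}(G\le x)|$.

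The strict-inequality term is handled by a left limit, $\mathbb{P}(S_T<x)=\lim_{y\uparrow x}\mathbb{P}(S_T\le y)$. Since the Gaussian distribution function is continuous, this still differs from $\mathbb{P}(G< x)$ by at most $D_T$. This gives
\[
\mathbb{P}(|S_T-a|\le\varepsilon)\le \mathbb{P}(|G-a|\le \varepsilon) + 2D_T \le \frac{2\varepsilon}{\sqrt{2\pi}} + 2D_T,
\]
where the last step uses that the standard Gaussian density is bounded by $1/\sqrt{2\pi}$.

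Since $2/\sqrt{2\pi}<0.8$, it suffices to have $2D_T\le 0.2\,\varepsilon$. By Proposition~\ref{prop:markovCLT},
\[
D_T\le C_0\frac{k^4M^3}{p^2\min(\sigma_f^3,\sigma_f)\sqrt{T}}.
\]
Requiring this to be at most $\varepsilon/10$ and squaring gives the condition
\[
T\ge 100\,C_0^2\,\varepsilon^{-2}\frac{k^8M^6}{p^4\min(\sigma_f^6,\sigma_f^2)}.
\]
This matches the stated threshold with $C=100C_0^2$; note that $\min(\sigma_f^3,\sigma_f)^2=\min(\sigma_f^6,\sigma_f^2)$. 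Under this condition,
\[
\mathbb{P}(|S_T-a|\le\varepsilon)\le 0.8\,\varepsilon+0.2\,\varepsilon=\varepsilon .
\]

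The argument is a direct consequence of Proposition~\ref{prop:markovCLT}. The only points needing care are the open versus closed endpoint, which the left-limit argument handles, and tracking the constant so the $\varepsilon^{-2}$ scaling is explicit. Both hypotheses of the proposition carry over unchanged: $\sigma_f>0$ is implicit in the statement, since otherwise the bound is vacuous, and $\|f\|_\infty\le M$ with $M\ge 1$ is assumed in the corollary.
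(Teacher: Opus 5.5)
Your proposal is correct and follows the paper's route exactly. The paper simply declares the corollary immediate from Proposition~\ref{prop:markovCLT} together with the bound $1/\sqrt{2\pi}$ on the standard Gaussian density. Your write-up supplies the details that remark leaves implicit: the small-ball bound $\mathbb{P}(|S_T-a|\le\varepsilon)\le 2\varepsilon/\sqrt{2\pi}+2D_T$, the left-limit treatment of the open endpoint, and the explicit choice $C=100C_0^2$.
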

	
	Let us also apply Proposition \ref{prop:markovCLT} to the function $\varphi_v$ from Lemma \ref{lem:BEedges}.
	\begin{proof}[Proof of Lemma \ref{lem:BEedges}]
	The lemma is an immediate consequence of Proposition \ref{prop:markovCLT} by noting that since $v_j \in \{-2,0,2\}$,
	\[
	\left|\varphi_{v}(x,y)\right|
	= \left|\bigl(f(x^{(u)})-f(x^{(u-1)})\bigr)\cdot \sum_{j=1}^k (x^{(u)}_j-x^{(u-1)}_j) v_j\right| \leq 8\|f\|_{\infty}k,
	\]
    and so we can substitute $M \leq 8\|f\|_{\infty}k$.
	\end{proof}

\section{Proof of Theorem~\ref{thm:lower-bound}}
For fully connected architectures with i.i.d.\ initialization, and for SGD updates that treat all input coordinates symmetrically, the training dynamics are equivariant under permutations of the input coordinates. Consequently, all targets in the orbit \(\orb(f)\) induce the same law for the trained predictor up to the corresponding input permutation, and are therefore equally hard to learn. It is thus enough to bound the average performance over \(F\sim P_{\orb(f)}\).

Therefore, fix $F \sim P_{\orb(f)}$.
As in the proof of Theorem~3 of \cite{AS20}, we compare the output law under true labels and under random labels. 
Given a target function $F:\{\pm1\}^d\to\{\pm1\}$, define the update of the noisy-SGD algorithm (see Def.~\ref{def:noisy-SGD}) 
\[
\theta_F^{(t)}
=
\theta_F^{(t-1)}
- 
\frac1B\sum_{s=1}^B G_{t-1}(\theta_F^{(t-1)},x_{(t-1)B+s},F(x_{(t-1)B+s}))
+
Z^{(t)} ,
\]
where $G_{t-1} = \gamma [\nabla L]_A$ is the clipped gradient vector associated with the loss, $Z^{(t)} \sim \cN(0,\gamma^2 \tau^2 I_M)$ is the gradient noise and $\{x_s\}_{s \geq 1}$ is generated by a $p$-lazy random walk.
Let $\theta_{\ast}^{(t)}$ denote the \emph{label-averaged} reference process,
defined by the same recursion as $\theta_F^{(t)}$ but with each mini-batch
gradient replaced by its conditional expectation over a uniform label
$y\in\{\pm1\}$. By construction $\theta_{\ast}^{(t)}$ is independent of $F$.

For each $t\in[T]$ and $H\in\{F,\ast\}$, let $Q_H^{(t)}$ denote the conditional law of $\theta_H^{(t)}$
given the target function $F$ and the batches $(S_B^{(1)},\dots,S_B^{(t)})$.
By the standard total variation coupling argument,
\[
\Pr\bigl(\sign(\NN(x;\theta_F^{(T)}))=F(x)\bigr)
\le
\Pr\bigl(\sign(\NN(x;\theta_{\ast}^{(T)}))=F(x)\bigr)
+
\E_{F,S_B^T}\TV\!\left(Q_F^{(T)},Q_{\ast}^{(T)}\mid F,S_B^T\right).
\]
Since under the junk process the final predictor depends only on junk labels and the input law is balanced,
\[
\Pr_{x,F}\bigl(\sign(\NN(x;\theta_{\ast}^{(T)}))=F(x)\bigr)\le \frac12+o_d(1).
\] 
It therefore suffices to bound the total variation term. Fix $t\in[T]$. 
Define the one-step mean gradients
\[
\Gamma_{F}^{(t)}
:=
\frac1B\sum_{i=1}^B G_{t-1}(\theta_{\ast}^{(t-1)},x_i^{(t)},F(x_i^{(t)})),
\qquad
\Gamma_{\ast}^{(t)}
:=
\frac1B\sum_{i=1}^B \E_{Y\sim\Unif(\{\pm1\})}
G_{t-1}(\theta_{\ast}^{(t-1)},x_i^{(t)},Y),
\]
where $x_i^{(t)} = x_{(t-1)B+i}$ and for $h \in \{F,\ast\}$ we denote by $Q_{\ast,h}^{(t)}$ the distribution, given $F$ and $S_B^{(t)}$, of $\theta_{\ast,h}^{(t)} $ defined as:
\begin{align*}
   \theta_{\ast,h}^{(t)} = \theta_{\ast}^{(t-1)} - \Gamma_{h}^{(t)} + Z^{(t)}.
\end{align*}

Using the triangle inequality and data-processing inequality, as in~\cite{AS20}[Proof of Theorem 3], we can write:
\begin{align*}
    \TV\!\left(Q_F^{(t)},Q_{\ast}^{(t)}\mid F,S_B^t\right) \leq \TV\!\left(Q_F^{(t-1)},Q_{\ast}^{(t-1)}\mid F,S_B^{t-1}\right) + \TV\!\left(Q_{\ast,F}^{(t)},Q_{\ast,\ast}^{(t)}\mid F,S_B^{(t)}\right).
\end{align*}

Conditioned on $F$ and $S_B^{(t)}$, the one-step updates differ only by the shift in the Gaussian mean, hence
\[
\TV\!\left(Q_{\ast,F}^{(t)},Q_{\ast,\ast}^{(t)}\mid F,S_B^{(t)}\right)
\le
\frac{1}{2\gamma \tau}\|\Gamma_F^{(t)}-\Gamma_{\ast}^{(t)}\|_2,
\]
exactly as in \cite[Section~5.1]{AS20}.

\noindent
Taking expectation over $F,S_B^{(t)}$ and using Cauchy--Schwarz,
\[
\E_{F,S_B^{(t)}}
\TV\!\left(Q_{\ast,F}^{(t)},Q_{\ast,\ast}^{(t)}\mid F,S_B^{(t)}\right)
\le
\frac{1}{2\gamma \tau}
\left(
\E_{F,S_B^{(t)}}\|\Gamma_F^{(t)}-\Gamma_{\ast}^{(t)}\|_2^2
\right)^{1/2}.
\]

\noindent
We now bound the squared norm on the right-hand side. Since the samples within each batch are correlated, the tensorization argument of~\cite{AS20} does not yield a useful bound in our setting. We therefore adopt a different approach, based on a direct kernel-norm estimate.
For each coordinate $e\in[M]$, write
\[
g_e(x,y):=\bigl(G_{t-1}(\theta_{\ast}^{(t-1)},x,y)\bigr)_e. \]
Since $y\in\{\pm1\}$ is uniform, set
\[
\tilde g_e(x):=\frac12\bigl(g_e(x,1)-g_e(x,-1)\bigr).
\]
Then
\[
g_e(x,F(x))-\E_{y\sim\Unif(\{\pm1\})}g_e(x,y)=F(x)\tilde g_e(x),
\]
so
\[
\Gamma_{F,e}^{(t)}-\Gamma_{\ast,e}^{(t)}
=
\frac1B\sum_{i=1}^B F(x_i^{(t)})\tilde g_e(x_i^{(t)}).
\]
Therefore
\[
\E_{F\sim P_{\orb(f)}}\bigl(\Gamma_{F,e}^{(t)}-\Gamma_{\ast,e}^{(t)}\bigr)^2
=
\E\Biggl[
\frac1{B^2}\sum_{i,j=1}^B
K(x_i^{(t)},x_j^{(t)})
\tilde g_e(x_i^{(t)})\tilde g_e(x_j^{(t)})
\Biggr],
\]
where we introduced the kernel
\[
K(x,x')
:=
\E_{F\sim P_{\orb(f)}}[F(x)F(x')].
\]

Fix the batch $(x_i^{(t)})_{i=1}^B$, and write
\[
a_i:=\tilde g_e(x_i^{(t)}),
\qquad
K_{ij}:=K(x_i^{(t)},x_j^{(t)}).
\]
Then
\[
\frac1{B^2}\sum_{i,j=1}^B K_{ij}a_i a_j
=
\frac1{B^2}\langle aa^\top,K\rangle_F,
\]
where $\langle\cdot,\cdot\rangle_F$ denotes the Frobenius inner product. By Frobenius Cauchy--Schwarz,
\[
|\langle aa^\top,K\rangle_F|
\le
\|aa^\top\|_F\,\|K\|_F.
\]
Since $\|aa^\top\|_F=\sum_{i=1}^B a_i^2$, we obtain
\[
\frac1{B^2}\sum_{i,j=1}^B K_{ij}a_i a_j
\le
\left(\frac1B\sum_{i=1}^B a_i^2\right)
\left(
\frac1{B^2}\sum_{i,j=1}^B K_{ij}^2
\right)^{1/2}.
\]
Taking expectation and applying Cauchy--Schwarz once more yields
\[
\E_F\bigl(\Gamma_{F,e}^{(t)}-\Gamma_{\ast,e}^{(t)}\bigr)^2
\le
\Biggl(
\E\Bigl[\Bigl(\frac1B\sum_{i=1}^B \tilde g_e(x_i^{(t)})^2\Bigr)^2\Bigr]
\Biggr)^{1/2}
\cdot
\bigl(\CP_B^{\rm rw}(\cU_d,P_{\orb(f)})\bigr)^{1/2},
\]
where we denoted by
\begin{align*}
    \CP_B^{\rm rw}(\cU_d,P_{\orb(f)})
    &:=
    \E_{(x_i^{(t)})_{i=1}^B}\Biggl[
    \frac{1}{B^2}\sum_{i,j=1}^B K(x_i^{(t)},x_j^{(t)})^2
    \Biggr] \\
    &=
    \E_{(x_i^{(t)})_{i=1}^B}\Biggl[
    \frac{1}{B^2}\sum_{i,j=1}^B
    \Bigl(\E_{F\sim P_{\orb(f)}} [F(x_i^{(t)})F(x_j^{(t)})]\Bigr)^2
    \Biggr],
\end{align*}
with $(x_i^{(t)})_{i=1}^B$ generated by the lazy random walk with stationary distribution $\cU_d$ (Def.~\ref{def:lazy-rw}).
Defining
\[
\widetilde G_{t-1}(x)
:=
\frac12\Bigl(
G_{t-1}(\theta_{\ast}^{(t-1)},x,1)-G_{t-1}(\theta_{\ast}^{(t-1)},x,-1)
\Bigr).
\]
we have
\[
\E_{F,S_B^{(t)}}\|\Gamma_F^{(t)}-\Gamma_{\ast}^{(t)}\|_2^2
\le
\Biggl(
\E\Bigl[
\Bigl(
\frac1B\sum_{i=1}^B \|\widetilde G_{t-1}(X_i^{(t)})\|_2^2
\Bigr)^2
\Bigr]
\Biggr)^{1/2}
\cdot
\bigl(\CP_B^{\rm rw}(\cU_d,P_{\orb(f)})\bigr)^{1/2}.
\]
Hence
\[
\E_{F,S_B^{(t)}}
\TV\!\left(Q_{\ast,F}^{(t)},Q_{\ast,\ast}^{(t)}\mid F,S_B^{(t)}\right)
\le
\frac{1}{2\gamma \tau}
\Biggl(
\E\Bigl[
\Bigl(
\frac1B\sum_{i=1}^B \|\widetilde G_{t-1}(X_i^{(t)})\|_2^2
\Bigr)^2
\Bigr]
\Biggr)^{1/4}
\cdot
\bigl(\CP_B^{\rm rw}(\cU_d,P_{\orb(f)})\bigr)^{1/4}.
\]

Now use the clipping bound. Since
\[
\|G_{t-1}(\theta,x,y)\|_2\le \gamma A\sqrt M
\qquad
\text{for all }(\theta,x,y),
\]
we also have
\[
\|\widetilde G_{t-1}(x)\|_2
\le
\frac12\Bigl(
\|G_{t-1}(\theta_{\ast}^{(t-1)},x,1)\|_2
+
\|G_{t-1}(\theta_{\ast}^{(t-1)},x,-1)\|_2
\Bigr)
\le \gamma A\sqrt M.
\]
Therefore
\[
\Biggl(
\E\Bigl[
\Bigl(
\frac1B\sum_{i=1}^B \|\widetilde G_{t-1}(X_i^{(t)})\|_2^2
\Bigr)^2
\Bigr]
\Biggr)^{1/4}
\le
\gamma A\sqrt M,
\]
and thus
\[
\E_{F,S_B^{(t)}}
\TV\!\left(Q_{\ast,F}^{(t)},Q_{\ast,\ast}^{(t)}\mid F,S_B^{(t)}\right)
\le
\frac{A\sqrt M}{2 \tau}
\cdot
\bigl(\CP_B^{\rm rw}(\cU_d,P_{\orb(f)})\bigr)^{1/4}.
\]

Summing over $t=1,\dots,T$ and iterating the total variation recursion as in \cite[Proof of Theorem~3]{AS20}, we obtain
\[
\E_{F,S_B^T}\TV\!\left(Q_F^{(T)},Q_{\ast}^{(T)}\mid F,S_B^T\right)
\le
\frac{T A\sqrt M}{ 2 \tau}\,\bigl(\CP_B^{\rm rw}(\cU_d,P_{\orb(f)})\bigr)^{1/4}.
\]

Thus, to prove our result it remains to show that for large enough batch size $B$, $\CP_B^{\rm rw}$ is close to the standard $\CP$ as defined in Def.~\ref{def:CP}.

\begin{lemma}[Comparison with standard cross-predictability]
\label{prop:rw-cp-gap} 

Let $\{ x_t\}_{t \geq 1}$ be the stationary $p$-lazy random walk on $\{\pm1\}^d$.
Then,
\[
\CP_B^{\rm rw}(\cU_d,P_{\orb(f)})
\le
\CP(\cU_d,P_{\orb(f)})+\frac{d}{pB}.
\]
\end{lemma}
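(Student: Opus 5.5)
The plan is to expand $\CP_B^{\rm rw}$ as an average over pairs of batch indices and write each term through the Fourier structure of the kernel $K$. Since each $F\in\orb(f)$ takes values in $\{\pm1\}$, we have $|K|\le 1$, so $K^2\le 1$. Set $h(x,x'):=K(x,x')^2$. Then
\[
\CP_B^{\rm rw}=\frac1{B^2}\sum_{i,j=1}^B \E\bigl[h(x_i,x_j)\bigr].
\]
By stationarity, the pair $(x_i,x_j)$ has the law of $(x,x')$ with $x\sim\cU_d$ and $x'$ obtained from $x$ by $|i-j|$ steps of the lazy walk. For $x,x'$ independent uniform we have $\E[h(x,x')]=\CP(\cU_d,P_{\orb(f)})$. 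So it suffices to bound the excess
\[
\E_{x\sim\cU_d}\bigl[h(x,x^{(s)})\bigr]-\E_{x,x'\sim\cU_d}\bigl[h(x,x')\bigr],
\]
where $x^{(s)}$ is the $s$-step walk from $x$, and then sum over $s=|i-j|$.

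To bound the excess, expand $h$ in the Walsh basis of $\{\pm1\}^d\times\{\pm1\}^d$:
\[
h(x,x')=\sum_{A,A'}\hat h_{A,A'}\,x^Ax'^{A'} .
\]
The $s$-step semigroup acts diagonally, with $P_s x'^{A'}=(1-2p|A'|/d)^s x'^{A'}$. Hence
\[
\E\bigl[h(x,x^{(s)})\bigr]=\sum_{A}\hat h_{A,A}\,\lambda_A^s,\qquad \lambda_A=1-\frac{2p|A|}{d}.
\]
The independent-uniform case keeps only the term $A=\emptyset$, so the excess equals $\sum_{A\neq\emptyset}\hat h_{A,A}\lambda_A^s$. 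Since $p\le 1/2$ gives $0\le \lambda_A\le 1-2p/d$, this is at most $(1-2p/d)^s\sum_{A\ne\emptyset}|\hat h_{A,A}|$. The last sum must be controlled. A cleaner alternative avoids Fourier coefficients of $h$ altogether: for each $x$, the function $h(x,\cdot)$ is bounded by $1$, and the centered function $g_x:=h(x,\cdot)-\E_{x'}h(x,x')$ satisfies $\|P_sg_x\|_{L^2}\le (1-2p/d)^s\|g_x\|_{L^2}$ by the spectral gap of the walk. Writing
\[
\E_x\bigl[(P_s g_x)(x)\bigr]=\E_x\bigl\langle \delta_x,P_sg_x\bigr\rangle
\]
is not directly a Cauchy--Schwarz bound, however, so instead I will use the total-variation distance. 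We have
\[
\bigl|\E[h(x,x^{(s)})]-\E[h(x,x')]\bigr|\le \E_x\,d_{\rm TV}\bigl(\delta_xP^s,\cU_d\bigr),
\]
because $0\le h\le 1$. The tail of this quantity is controlled by the mixing bound~\eqref{eq:phaseII-mixing}, now in dimension $d$. A short route is to bound the distance geometrically by the probability that some coordinate has never been proposed. Conditionally on every coordinate having been proposed at least once, the lazy walk with $p=1/2$ has exactly uniform law; for general $p$ one uses the coordinatewise contraction $1-2p/d$ instead. This gives
\[
\E_x\,d_{\rm TV}\le \min\{1,\ d(1-2p/d)^s\}.
\]
For the final accounting, the kernel structure of $K$ gives the nonnegativity needed.

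Summing, we get
\[
\CP_B^{\rm rw}-\CP\le \frac1{B^2}\sum_{i,j}\varepsilon_{|i-j|}\le \frac{2}{B}\sum_{s\ge0}\varepsilon_s ,
\]
where $\varepsilon_s$ denotes the per-lag excess. The cleanest way to obtain exactly $d/(pB)$ is the spectral version. Write $h(x,x')=\sum_{A}\hat h_{A}(x)\,x'^A$ coordinatewise in $x'$. The diagonal terms $\hat h_{A,A}$ are nonnegative, since $h=K^2$ is a Schur square of a positive semidefinite kernel and is therefore itself PSD. PSD-ness gives $\hat h_{A,A}\ge 0$, and
\[
\sum_A\hat h_{A,A}=\E_x[h(x,x)]\le 1 .
\]
Then
\[
\sum_{s\ge0}\sum_{A\neq\emptyset}\hat h_{A,A}\lambda_A^s\le \sum_{A\neq\emptyset}\frac{\hat h_{A,A}\,d}{2p|A|}\le \frac{d}{2p},
\]
which yields $\CP_B^{\rm rw}\le\CP+\frac{2}{B}\cdot\frac{d}{2p}=\CP+\frac{d}{pB}$. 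The main obstacle is justifying the positivity step, namely that $h$ is a PSD kernel with $\hat h_{A,A}\ge0$ and total mass bounded by $1$. This follows because $K$ is an average of rank-one kernels $F(x)F(x')$ and Schur products preserve positive semidefiniteness. The diagonal sum is $\E_x[K(x,x)^2]=1$ since $F(x)^2=1$. Assuming laziness $p\le1/2$, as elsewhere in the paper, gives $\lambda_A\ge0$, so the geometric sums are valid.
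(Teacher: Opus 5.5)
Your final PSD/Fourier argument is correct and is essentially the paper's proof. The paper writes $K(x,x')^2=\E_{F,F'}[g(x)g(x')]$ with $g=FF'$, which is exactly the rank-one decomposition behind your Schur-product step (indeed $\hat h_{A,A}=\E_{F,F'}[\hat g_A^2]$); it isolates $\CP(\cU_d,P_{\orb(f)})$ as the squared-mean term and bounds the remaining $\E_{F,F'}\Var(\bar g_B)$ with the spectral gap $2p/d$, which is your level-by-level geometric sum.

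One repair is needed. The lemma (and Theorem~\ref{thm:lower-bound}) is stated for general $p\in(0,1)$, not only $p\le 1/2$. You only use $p\le 1/2$ to get $\varepsilon_s\ge 0$, and you can avoid that by bounding each Fourier level directly: $\frac{1}{B^2}\sum_{i,j=1}^B\lambda^{|i-j|}\le\frac{2}{B(1-\lambda)}$ holds for every $\lambda\in(-1,1)$. Hence level $A$ contributes at most $\hat h_{A,A}\,\frac{d}{pB|A|}$ for any $p<1$.

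Also delete the total-variation detour in the middle. It is not needed, and summed over $s$ it would cost an extra $\log d$ anyway.
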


\begin{proof} 
Fix $F,F'$ and define $h(x):=F(x)F'(x)\in\{\pm1\}$.
Then
\[
\Bigl(\E_{F\sim P_{\orb(f)}}[F(x)F(x')]\Bigr)^2
=
\E_{F,F'\sim P_{\orb(f)}}[h(x)h(x')].
\]
Therefore,
\[
\CP_B^{\rm rw}(\cU_d,P_{\orb(f)})
=
\E_{F,F'}\E\Biggl[
\frac1{B^2}\sum_{i,j=1}^B h(x_i)h(x_j)
\Biggr]
=
\E_{F,F'}\E\Biggl[
\biggl(
\frac1B\sum_{i=1}^B h(x_i)
\biggr)^2
\Biggr].
\]

Let
\[
\bar h_B:=\frac1B\sum_{i=1}^B h(x_i).
\]
Since the random walk is stationary,
\[
\E[\bar h_B^2]
= \bigl(\E[\bar h_B]\bigr)^2+\Var(\bar h_B)
=
\bigl(\E_{\cU_d}h\bigr)^2+\Var(\bar h_B).
\]
Averaging over $F,F'$ gives
\[
\CP_B^{\rm rw}(\cU_d,P_{\orb(f)})
=
\CP(\cU_d,P_{\orb(f)})+\E_{F,F'}\Var(\bar h_B).
\]
It remains to bound the variance term. The $p$-lazy random walk on $\{\pm1\}^d$ is reversible with spectral gap
\(
2p/d.
\)
Since the walk is reversible, for every bounded observable $h$,
decomposing $h = \E_{\cU_d}[h] + h_0$ with $\E_{\cU_d}[h_0]=0$ gives
\[
\Cov(h(x_0),h(x_k)) = \langle h_0, P^k h_0\rangle_{\cU_d}
\le (1-\mathrm{gap})^k\Var(h(x_0)),
\]
where $P$ denotes the transition kernel of the $p$-lazy random walk.
Therefore
\begin{align*}
\Var(\bar h_B)
&=
\frac1{B^2}\sum_{i,j=1}^B \Cov(h(x_i),h(x_j))\\
&\le
\frac{\Var(h(x_0))}{B^2}
\left(
B+2\sum_{k=1}^{B-1}(B-k)(1-{\rm gap})^k
\right)\\
&\le
\frac{\Var(h(x_0))}{B}
\left(
1+2\sum_{k=1}^\infty (1-{\rm gap})^k
\right)\\
&\le
\frac{2}{B\cdot {\rm gap}}\Var(h(x_0)).
\end{align*}
Since $|h|\le 1$, we have $\Var(h(x_0))\le 1$, and thus
\[
\Var(\bar h_B)\le \frac{2}{B\cdot {\rm gap}}=\frac{d}{pB}.
\]
This proves the result.
\end{proof}

\section{Experiment Details}
All experiments were performed using the PyTorch framework~(\cite{paszke2019pytorch}) and they were executed on NVIDIA Volta V100 GPUs. Each experiment is repeated $5$ times and we plot the mean; shaded regions denote $\pm 1 $ standard deviation.

\paragraph{Architecture.} For the results presented in the main, we used a 4-layer MLP trained by SGD with all layers trained jointly (no layerwise training). In particular, we choose a fully-connected architecture of $3$ hidden layers of neurons of size $512, 512, 64$, and $\ReLU$ activation. 
We use PyTorch's default initialization, which initializes weights of each layer with $\Unif[\frac{1}{\sqrt{\rm dim}_{\rm in}},-\frac{1}{\sqrt{\rm dim}_{\rm in}}] $, where ${\rm dim}_{\rm in}$ is the input dimension of the corresponding layer. 

\paragraph{Training procedure.} We consider the square loss: $L_{\ell_2}(\hat y, y) : = (\hat y-y)^2$ or the TD loss, defined in Def~\ref{def:TDloss}, with $\alpha=0.9$. In the iid data case, we sample fresh batches of samples at each iterations from the uniform distribution on the hypercube. In the RW data case, we take samples from the trajectory of a lazy random walk with lazyness parameter $p=0.9$. We stop training either when the training loss is less than $0.01$, or when $10^6$ iterations are performed. 
We select test samples from $\Unif\{ \pm 1 \}^d$, for both parities and juntas. In all experiments, the test-set is of size $8192$.

\paragraph{Hyperparameter tuning.} The primary goal of our experiments is to conduct a fair comparison between RW and iid data, with either the TD loss or the square loss. Thus, we did not engage in extensive hyperparameter tuning. We tried different learning rates, and we did not observe significant qualitative differences. We chose to report the experiments obtained for batch size of $1$ and a learning rate of $0.005$.


\end{document}